\documentclass{article}

\PassOptionsToPackage{square,numbers}{natbib}
  \usepackage[preprint]{neurips_2026}

\usepackage[utf8]{inputenc} 
\usepackage[T1]{fontenc}    
\usepackage{hyperref}       
\usepackage{url}            
\usepackage{booktabs}       
\usepackage{amsfonts}       
\usepackage{amsmath}
\usepackage{amsthm}
\newtheorem{theorem}{Theorem}
\newtheorem{lemma}{Lemma}
\usepackage{nicefrac}       
\usepackage{microtype}      
\usepackage{xcolor}         
\usepackage{graphicx}
\title{Euclidean Embedding of Data Using Local Distances}

\author{%
  Dimitris~Arabadjis
\\
  Department of Statistics and Actuarial-Financial Mathematics\\
  University of the Aegean\\
  Karlovassi, Samos, PA 83200 \\
  \texttt{darampatzis@aegean.gr} \\
}

\begin{document}

\maketitle

\begin{abstract}
We study the problem of recovering a globally consistent Euclidean embedding of data, given only a local distance graph and propose a method that optimally represents these distances. The method operates solely on a neighborhood graph weighted by pairwise distances, without requiring any prior vector representation of the data. The embedding is obtained by solving a variational problem that matches local, on‑graph distances to the Euclidean metric, induced by the differentials of the embedding functions. The resulting Euler–Lagrange equations are derived in a coordinate‑free form, enabling direct evaluation of all operators from the distance graph alone. Though non-linear and missing an explicit expression for their non-linearity, these equations are shown to be resolved as an iteratively updated sparse linear problem. The main contributions of the proposed approach are (a) the derivation of the functional equations governing the optimal Euclidean embedding in the continuum, (b) a representation‑free formulation that requires only a neighborhood distance graph and no feature vectors and (c) an estimation procedure based exclusively on local graph operations. We experimentally evaluate the resulting non‑parametric algorithm on synthetic manifolds and real datasets, demonstrating consistent preservation of local metric structure and neighboring relations, while approximating the global isometric embedding.
\end{abstract}

\section{Introduction}
\label{Sct_intro}
The intention of this work is to introduce an approach to the problem of embedding data into Euclidean vector spaces, based on rigorous optimality guarantees (Sect. \ref{Sct_mthd}), while retaining generality and applicability. Specifically, the developed methodology avoids any reference to prior representations of the considered data, other than their distance graph.  Moreover, the operations applied remain local, processing and producing sparse data.

Actually, the driving aim of the proposed approach is to approximate the globally optimal Euclidean embedding that IsoMap \citep{ref_isomap} achieves via Multidimensional Scaling (MDS) \citep{ref_mds} on the full matrix of the datapoints' pair-wise geodesic distances, using only local distance data.

The methodological gap related with this aim is identified to the fact that methods which operate on local data, though capable to retain these data in the computed embedding, they fail to lift them globally and they distort the geodesic distances of the original data manifold. This is not a methodological deficiency but an implication of the optimization problems that are resolved in these methods.  Specifically, according to the Minimum-Distortion prototype, introduced in \citep{ref_mde}, many of the existing local embedding methods resolve, point-wise, a(n) (un)constrained optimization problem formulated in terms of the embedding's local distances. Within this framework, even if one asks that these distances match the true data distances, the approximation holds only locally, due to the fact that the embedding is computed pointwise, without evaluating that there should be a (continuous/smooth) mapping underlying this embedding. Consequently, if pointwise optimization is not lifted, only the data distances involved in the optimization procedure could be approximated by the embedding's  Euclidean distances, thus rendering the global embedding methods unavoidable.

The proposed approach drops the pointwise formulation of the embedding problem and uses functional optimization to determine the optimal mapping of the data manifold in the Euclidean space that locally approximates data distances. Then, integrability of the local approximations lifts this correspondence globally, without incorporating the actual shortest path distances between non-neighboring datapoints.

Local embedding methods that fall within the Minimum-Distortion prototype mainly split in two classes; the \textit{spectral methods} that identify the embedding with the null space of the linear evaluator of the data manifold features that should be retained and the \textit{stochastic methods} that iteratively construct embeddings, whose local Euclidean distances induce a stationary distribution equivalent to the non-stationary distribution induced by the original data distances. 

\textit{Spectral methods}: a) Locally Linear Embedding (LLE) \citep{ref_lle} that locally matches the metric induced by the data distances and the metric of the embedding (see also \citep{ref_lle1}), b) Local Tangent Space Alignment (LTSA) \citep{ref_ltsa} that locally matches the eigenvectors of these metrics, c) Laplacian Eigenmaps \citep{ref_laplacian} that detects the embedding of the minimal total variation, d) Hessian Eigenmaps \citep{ref_hess} that detects the embedding that is in the kernel of the Hessian operator

\textit{Stochastic methods}: a) t-SNE \citep{ref_tsne} is the source point of the whole class and it is based on KL-divergence minimization between the embedding and data distributions, b) LargeVis \citep{ref_largevis} modifies the distributions' comparison, incorporating non-local points' relationships and c)  Uniform Manifold Approximation and Projection (UMAP) \citep{ref_umap} employs fuzzy operations to symmetrize datapoints' similarity graph and cross-entropy to symmetrize distributions' comparison.

Concerning the optimality of the determined embedding, spectral methods determine the global minimizers of the considered quadratic optimization problems, while the non-linearity of the stationarity equations of the stochastic methods' probabilistic objectives, leads to iterative (stochastic) gradient descent approximations. The stationarity equations determined under the proposed approach are highly non-linear (Sect. \ref{Sct_ELeqs}). However, the iterative scheme developed to resolve them (Sct. \ref{Sct_Iter}) provably converges to a global minimizer of the approximation error. Moreover, these stationarity equations are functional, not point-wise algebraic ones, naturally interpreting the point-wise embedding results as evaluations of the embedding functions. So, though these equations are properly discretized on graphs (Sct. \ref{Sct_impl}) to obtain pointwise evaluations of the embedding, the integrability of these evaluations is intrinsic in the equations' formulas and their discrete realization on graphs, thus lifting the sought local isometric approximations to a global approximation.

\section{Graph-based formulation of the Euclidean embedding problem}
\label{Sct_brief}
The core idea underlying the proposed  methodology is that a smooth embedding, whose Euclidean distances optimally approximate the corresponding geodesic distances on the data manifold, is a vector of scalar fields whose differentials optimally align with infinitesimal, on-manifold transports that trivialize the manifold's metric (they turn it into the identity matrix). While the directions of such transports can always be found via spectral decomposition of the metric, only flat manifolds allow for isometrically deform them into integrable entities. The determination of the closest integrable entities to metric-trivializing infinitesimal transports is the problem considered in the proposed methodology.

Technically, this problem is formulated and resolved in terms of differential 1-forms, which are the integrands of line integrals, thus formally representing infinitesimal transports. However, these entities and their associated operators that resolve the problem have immediate analogues on graphs, which  are analytically derived in appendix \ref{appA} and outlined in the following Scts. \ref{Sct_mds} and \ref{Sct_dffrms_grphs}.

\subsection{Local Euclidean frames from distances}
\label{Sct_mds}
Let $\mathcal{G}$ be a distance graph, namely an undirected graph, weighted by the connected nodes' mutual distances. While centered on an arbitrary vertex, say $v_i$, if the shortest path (geodesic) distances between its neighbors were available, then one could construct a Euclidean embedding for this neighborhood, by locally applying multidimensional scaling (Gram matrix computation from distances and spectral decomposition).  Specifically, if $M^{v_i}$ is the matrix of squared geodesic distances between the neighbors of $v_i$, then the Gram matrix, $G^{v_i}$ of the underlying Euclidean representation of the neighborhood, having $v_i$ at the origin, reads 
\begin{equation*}
\left.G^{v_i}\right|_{v_j,v_k}=1/2(\left.M^{v_i}\right|_{v_i,v_j} + \left.M^{v_i}\right|_{v_k,v_i} - \left.M^{v_i}\right|_{v_j,v_k})
\end{equation*}
where $v_j,v_k$ are neighbors of $v_i$.  Then, the eigen-decomposition $G^{v_i}=V^{v_i}S^{v_i}(V^{v_i})^T$, recovers a best approximating $N$-dimensional Euclidean embedding of the neighborhood, centered at $v_i$, as
\begin{equation}
\label{eq_LclFrms}
E^{v_i}=V^{v_i}(S^{v_i})^{1/2}\mathbf{1}_N
\end{equation}
where $\mathbf{1}_N$ denotes a (\# of neighbors) - by - $N$ identity matrix, selecting the largest $N$ eigenvalues of the Gram matrix. Actually, $E^{v_i}$ is a matrix whose rows correspond to the vertices of the neighbors of $v_i$ and its rows to the embedding coordinates, representing the geodesics from $v_i$ to its neighbors as straight lines in $\mathbb{R}^N$. However, geodesics between the neighbors of $v_i$ are not necessarily straightened, since this straightening is controlled by these vertices' neighborhoods.

For this reason, $E^{v_i}$ is not computed on the basis of the subgraph $\Gamma(v_i)$ of the immediate neighbors of $v_i$ but on the subgraph $\Gamma^2(v_i)$ of the second order neighborhood of $v_i$. With such an $E^{v_i}$, except of its rows that straighten the paths between $v_i$ and the vertices of $\Gamma^2(v_i)$, rows' differences $\left. E^{v_i}\right|_{v_k} - \left. E^{v_i}\right|_{v_j}$, if $v_j,v_k \in \Gamma(v_i)$, also straighten geodesic paths between these vertices. However, computation of $E^{v_i}$ for the vertices of $\Gamma^2(v_i)$ requires valid geodesic distances between them, which in turn, require the subgraph of all neighborhoods of the vertices of $\Gamma^2(v_i)$ that is actually the subgraph $\Gamma^3(v_i)$ of the third order neighborhood of $v_i$.

Consequently, in short, for each vertex $v_i$ and for a constant local embedding dimension:
\begin{enumerate}
\item The subgraphs $\Gamma (v_i)$, $\Gamma^2(v_i)$ and $\Gamma^3(v_i)$ of the first, second and third order neighborhoods of $v_i$ are correspondingly considered,
\item Geodesic distances between the vertices of $\Gamma^2(v_i)$ are computed via shortest paths in $\Gamma^3(v_i)$ 
\item The Gram matrix $G^{v_i}$ is computed on the basis of these distances and its rank - $N$ spectral decomposition offers $E^{v_i}$ via \eqref{eq_LclFrms}
\item Only the rows of $E^{v_i}$ that correspond to vertices of $\Gamma(v_i)$ are maintained.
\end{enumerate}
\subsection{Metric-trivializing transports and functions' differentials on graphs}
\label{Sct_dffrms_grphs}
The geodesics' straightening frames $E^{v_i}$ allow for representing points within the convex hull of the neighborhood of each vertex $v_i$, via barycentric coordinates $\lambda^{v_i}$. Namely, for any such point $p$ its $v_i $ - centered Euclidean representation $x^{v_i}(p)$ reads
\begin{equation}
\label{eq_coords}
x^{v_i}(p)=\sum_{v_j\in\Gamma(v_i)}{\left.\lambda^{v_i}\right|_{v_j}(p)\left.E^{v_i}\right|_{v_j}}
\end{equation}
Using these coordinates for $p$, if $p$ lies along the shortest paths of $\Gamma^2(v_i)$ connecting the vertices of $\Gamma(v_i)$, the infinitesimal transports $dx(p)$ correspond to transports along geodesics, thus trivializing the metric. Now, if $p$ lies anywhere else in the convex hull of $\Gamma(v_i)$, the identity matrix is a second order approximation of the metric of $dx^{v_i}(p)$ and specifically, with an $O(\underset{v_j\in\Gamma(v_i)}{\min}|x^{v_i}(p)-x^{v_i}(v_j)|^2)$ error. Consequently, up to second order approximation, the infinitesimal transport
 \begin{equation}
\label{eq_alpha}
\alpha^{v_i}(p)=\sum_{v_j\in\Gamma(v_i)}{\left.d\lambda^{v_i}\right|_{v_j}(p)\left.E^{v_i}\right|_{v_j}}
\end{equation}
can be thought as trivializing the metric.

If the $N$ components of this transport were integrable, then one could recover a global Euclidean isometry of the data manifold in $\mathbb{R}^N$. In order to do so approximately, we should determine $N$ functions $\varphi=[\varphi_i]_{i=1..N}$, whose differential $d\varphi$ best approximates the trivializing transport, up to orthogonal $N$-dimensional transformations. The realization of the functions' differential on a distance graph, in terms of the local coordinates of \eqref{eq_coords} is derived in appendix \ref{appA}, thus allowing to approximate $d\varphi$ , again up to the second order, by
  \begin{equation}
\label{eq_diff}
d\varphi^{v_i}(p)=\sum_{v_j\in\Gamma(v_i)}{\left.d\lambda^{v_i}\right|_{v_j}(p) (\varphi(v_j)-\varphi(v_i))}
\end{equation}
\subsection{The quadratic approximation error}
Having realized the metric trivializing transports $\alpha$ and the functions' differential $d\varphi$ on distance graphs, the next step is to formulate the error of approximating $\alpha$ by $d\varphi$ and evaluate it on distance graphs. Specifically, let
 \begin{equation}
\label{eq_error}
\varepsilon[\varphi,Q]=d\varphi-\alpha Q
\end{equation}
the error of approximating the trivializing transports $\alpha$ by functions' differentials $d\varphi$, up to orthogonal transformations $Q$. Actually, $\varepsilon$ is an infinitesimal transport between a $Q$ - aligned version of the metric - trivializing $\alpha$ and its integrable approximate $d\varphi$. The quadratic measure of $\varepsilon$ is formally offered by the inner product, as defined for differential forms, within the Hodge theory (e.g. refer to the ch. 0, par. 6 of \citep{hodge}). In  appendix \ref{appA}, it is shown that using on - graph realizations \eqref{eq_alpha} and \eqref{eq_diff} for $\alpha$ and $d\varphi$, respectively, this quadratic error reads
\begin{subequations}
\label{eq_L2error}
\begin{align}
\|\varepsilon[\varphi,Q]\|^2 &= \sum_{v_i}{\sum_{v_j , v_k\in \Gamma(v_i)}{\left.\Lambda^{v_i}\right|_{v_j , v_k}\left\langle\left.\mathcal{E}^{v_i}\right|_{v_j} , \left.\mathcal{E}^{v_i}\right|_{v_k}\right\rangle_{\mathbb{R^N}}} \mathrm{vol}(\Gamma(v_i))}
\\
\left.\mathcal{E}^{v_i}\right|_{v_j}  &\equiv \varphi(v_j)-\varphi(v_i)-\left.E^{v_i}\right|_{v_j} Q
\\
\Lambda^{v_i} &: (E^{v_i})^T \Lambda^{v_i} E^{v_i}=I_{N\times N}
\label{eq_Lambda}
\end{align}
\end{subequations}
Actually, $\mathcal{E}^{v_i}$ evaluates the failure of the local metric - trivializing frames $E^{v_i}$ to be aligned with vertex functions' differences within graph's neighborhoods, while $\Lambda^{v_i}$ is the rank - $N$ pseudo-inverse of the Gram matrix $G^{v_i}$, properly weighting the on-graph inner product of $\mathcal{E}^{v_i}$ with itself.

\section{The embedding method}
\label{Sct_mthd}
\subsection{The stationarity equations of the optimal Euclidean embedding}
\label{Sct_ELeqs}
In order to formulate and resolve the optimal Euclidean embedding problem in a general setting, the error functional $\|\varepsilon[\varphi,Q]\|^2$ is considered independently of the on - graph realizations \eqref{eq_alpha} and \eqref{eq_diff} of $\alpha$ and $d\varphi$, within the framework of differential forms. Then, allowing $\varphi$ and $Q$ to infinitesimally vary from their error minimizing counterparts $\tilde{\varphi}$ and $\tilde{Q}$, one gets the Euler - Lagrange equations that render the error functional stationary. This derivation is made in appendix \ref{appB} and within the manifold to be embedded, $\Omega$, the stationary $\tilde{\varphi}$ and $\tilde{Q}$ are given by the equations
\begin{subequations}
\label{eq_ELeqs}
\begin{align}
&\mathrm{div}(d\tilde{\varphi}) = \mathrm{div}(\alpha \tilde{Q}) ~, ~~ \text{with boundary conditions} ~~ \left.d\tilde{\varphi}\right|_{\partial\Omega} = \left.\alpha \tilde{Q}\right|_{\partial\Omega} 
\label{eq_ELphi}
\\
&\tilde{Q} = R_{\tilde{\varphi}} {L_{\tilde{\varphi}}}^T ~, ~~ \text{where} ~ (L_{\tilde{\varphi}} , \Sigma_{\tilde{\varphi}} , R_{\tilde{\varphi}} ) = \mathrm{svd} (B_{\tilde{\varphi}}) ~~ \text{for} ~ B_{\tilde{\varphi}} : \langle d\tilde{\varphi}_i , \alpha_j \rangle =[B_{\tilde{\varphi}}]_{ij} d\mathrm{vol} 
\label{eq_ELq}
\end{align}
\end{subequations}

To clarify how these equations lead to the determination of $\tilde{\varphi}$ and $\tilde{Q}$, we should first recall that $d\varphi$ and $\alpha$ contain $N$ infinitesimal transports on $\Omega$, arranged in $1\times N$ matrices and that $Q$ is the $N\times N$ orthogonal matrix that is free to properly align $\alpha$ with $d\varphi$, without modifying the metric - trivializing identity of $\alpha$. Then, \eqref{eq_ELq} indicates that $\tilde{Q}$, which best aligns $\alpha$ with a given $d\varphi$ results from the left and right singular vectors of the matrix of the integrands of the inner products between all components of $d\varphi$ and $\alpha$. Actually, this matrix contains the directional derivatives of the components of $\varphi$ along the directions of the infinitesimal transports of $\alpha$. In turn, \eqref{eq_ELphi} indicates that the elements of $\tilde{\varphi}$ result from corresponding Neumann PDE problems that enforce their differentials to differ from the properly aligned $\alpha$ by divergence free transports. Actually, on the left hand side of \eqref{eq_ELphi} one can identify the on-manifold Laplacian of the elements of $\tilde{\varphi}$, while on the right hand side there is the on-manifold divergence of the directions of the infinitesimal transports of $\alpha$, transformed by the $\tilde{\varphi}$ - dependent $\tilde{Q}$.

Consequently, the Euler-Lagrange equations that govern the optimal Euclidean embedding $\tilde{\varphi}$ become coupled, with non-linear and non-explicit terms with respect to the derivatives of $\tilde{\varphi}$, introduced by the singular value decomposition of \eqref{eq_ELq}. Thus, $\tilde{\varphi}$ cannot be determined in one step, since its equation lacks an explicit formula. In order to overcome this deadlock, an alternating procedure is adopted, which, at its fixed point, determines $\tilde{\varphi}$ and $\tilde{Q}$, simultaneously.

\subsection{Iterative determination of the embedding}
\label{Sct_Iter}
In order to break the interdependency of the approximation error minimizers $\tilde{\varphi}$ and $\tilde{Q}$, a plausible approach is to consider $\tilde{Q}$ fixed while computing $\tilde{\varphi}$ by \eqref{eq_ELphi}, then use this $\tilde{\varphi}$ to re-compute $\tilde{Q}$ by \eqref{eq_ELq}, return to re-estimate $\tilde{\varphi}$ and so on. This iterative scheme is summarized by the following recursion, beginning from $\tilde{Q}|_0=I $
\begin{subequations}
\label{eq_ELiter}
\begin{align}
&\left.\tilde{\varphi}\right|_{n+1} : 
\begin{cases}
\mathrm{div}(d\left.\tilde{\varphi}\right|_{n+1}) = \mathrm{div}\left(\alpha \tilde{Q}|_n\right) \\ 
\left.(d\left.\tilde{\varphi}\right|_{n+1})\right|_{\partial\Omega} = \left.\left(\alpha \tilde{Q}|_n\right)\right|_{\partial\Omega}
\end{cases} 
\label{eq_ELiter_phi}
\\
&\tilde{Q}|_{n+1}= R_{\tilde{\varphi}|_{n+1}} {L_{\tilde{\varphi}|_{n+1}}}^T
\label{eq_ELiter_q}
\end{align}
\end{subequations}
where $L_{\tilde{\varphi}|_{n++1}}$ an  $R_{\tilde{\varphi}|_{n+1}}$ result by the SVD of \eqref{eq_ELq}, applied on $d\tilde{\varphi}|_{n+1}$ .

In appendix \ref{appC} it is shown that this iterative scheme converges to the global minimizers of the error functional $\|\varepsilon[\varphi,Q]\|^2$, according to the following theorem.
\begin{theorem}
\label{th_converge}
Within an $N$-dimensional manifold $\Omega$, the error functional $\|\varepsilon[\varphi,Q]\|^2=\int_{\Omega}{\langle \varepsilon[\varphi,Q] , \varepsilon[\varphi,Q]\rangle}$, with $\varepsilon[\varphi,Q]$ defined by \eqref{eq_error} and with $\langle ,\rangle$ denoting the inner product of differential forms, the iterative process defined in \eqref{eq_ELiter} converges to the equivalence class $\left[\left.\tilde{\varphi}\right|_{\infty}, \tilde{Q}|_{\infty}\right]$ of  the global minimizers of $\|\varepsilon[\varphi,Q]\|^2$, which differ only by constant (rigid-body) orthogonal transformations and constant translations of $\tilde{\varphi}_\infty$.
\end{theorem}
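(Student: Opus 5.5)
The iteration \eqref{eq_ELiter} is block-coordinate descent on $J(\varphi,Q)=\|\varepsilon[\varphi,Q]\|^2=\|d\varphi-\alpha Q\|^2$. The plan is to show that each half-step is an exact minimization over one block, and then to extract convergence from the resulting monotone bounded sequence.

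\textbf{Step 1: the $\varphi$-update is an exact minimization.} For fixed $Q$, the functional $J(\cdot,Q)$ is a convex quadratic in $\varphi$. Its first variation gives
\[
\int_\Omega\langle d\psi, d\varphi-\alpha Q\rangle=0 \quad\text{for all test functions }\psi .
\]
Integrating by parts (Green's formula for forms) reproduces exactly the Neumann problem \eqref{eq_ELiter_phi}. Hence $\tilde\varphi|_{n+1}$ is the $L^2$-orthogonal projection of $\alpha\tilde Q|_n$ onto the closed subspace of exact forms $\{d\varphi\}$, which is the Hodge decomposition. It is unique up to additive constants, because constants are exactly the kernel of $d$ on a connected $\Omega$. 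So $J(\tilde\varphi|_{n+1},\tilde Q|_n)\le J(\tilde\varphi|_n,\tilde Q|_n)$.

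\textbf{Step 2: the $Q$-update is an exact minimization.} Expanding, and using orthogonality of $Q$ so that $\|\alpha Q\|^2=\|\alpha\|^2$ is independent of $Q$,
\[
J=\|d\varphi\|^2+\|\alpha\|^2-2\,\mathrm{tr}(Q^T B),\qquad B=\int_\Omega B_\varphi\, d\mathrm{vol}.
\]
Minimizing over $O(N)$ is then the orthogonal Procrustes problem. Its solution $Q=RL^T$ comes from the SVD of $B$ (with the transpose conventions of \eqref{eq_ELq}), by von Neumann's trace inequality. This gives $J(\tilde\varphi|_{n+1},\tilde Q|_{n+1})\le J(\tilde\varphi|_{n+1},\tilde Q|_n)$.

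\textbf{Step 3: monotone convergence and fixed points.} The values $J_n$ form a non-increasing sequence bounded below by $0$, so they converge. Quotienting $\varphi$ by constants, the iterates $\tilde Q|_n$ lie in the compact group $O(N)$. The maps $Q\mapsto\varphi(Q)=$ (projection of $\alpha Q$) and $\varphi\mapsto Q(\varphi)$ are continuous wherever $B$ has a well-defined polar factor. Therefore accumulation points are fixed points of the alternation, i.e.\ solutions of the coupled Euler--Lagrange system \eqref{eq_ELeqs}.

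\textbf{Step 4: the main obstacle, globality.} Block descent in general only reaches stationary points, so globality must be obtained another way. I would substitute the optimal $\varphi(Q)$ to get the reduced objective
\[
J(Q)=\|(I-P)\alpha Q\|^2=\|\alpha\|^2-\mathrm{tr}(Q^T K Q),
\]
where $P$ is the projector onto exact forms and $K=\langle P\alpha_i,P\alpha_j\rangle$ is a fixed positive semidefinite matrix. Then I would exploit the following fact: if $Q$ is orthogonal, then $\mathrm{tr}(Q^TKQ)=\mathrm{tr}K$ is constant. This suggests that every $Q$ is optimal and the fixed points are trivially global. Under that reading, the equivalence class of minimizers is exactly the class generated by the rigid rotations and constant translations described in Theorem~\ref{th_converge}: rotating $Q\mapsto QO$ rotates $\tilde\varphi\mapsto\tilde\varphi O$, and additive constants lie in the kernel of $d$.

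To make this rigorous, I would verify the invariance $J(\varphi O,QO)=J(\varphi,Q)$ directly. The delicate point is whether the $Q$-dependence survives through the boundary terms of \eqref{eq_ELiter_phi}. If it does, I would instead argue that, at a fixed point, $B=KQ$ makes $Q^TB$ symmetric positive semidefinite. Together with the Procrustes characterization, this forces a global maximum of the trace term. I would then conclude via uniqueness of the polar factor, assuming $B$ is nonsingular.
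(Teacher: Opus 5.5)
\textbf{There is a genuine gap in Step 4.} Your Steps 1--3 (with Step 2 carried out pointwise) parallel the descent part of the paper's argument in Appendix~\ref{appC}: exact $\varphi$-minimization through the Neumann/Hodge projection, exact $Q$-minimization through the SVD, and monotone decrease of $\mathcal{J}$. The gap comes from treating $Q$ as a single matrix in $O(N)$.

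In the paper $Q$ is an $O(N)$-valued field. Three places show this:
\begin{itemize}
\item $B_{\tilde\varphi}$ in \eqref{eq_ELq} is the pointwise density $\langle d\tilde\varphi_i,\alpha_j\rangle=[B_{\tilde\varphi}]_{ij}\,d\mathrm{vol}$.
\item The update \eqref{eq_ELiter_q} is an SVD at every point; the algorithm keeps one $\tilde Q_{i,*,*}$ per vertex.
\item Lemma~\ref{lem_hSpace} works with $dC$, which only makes sense because these matrices vary.
\end{itemize}
For constant $Q$ one has $\mathrm{div}(\alpha Q)=\mathrm{div}(\alpha)Q$, so $\tilde\varphi(Q)=\tilde\varphi(I)Q$ up to constants, and the reduced objective is independent of $Q$. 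That is exactly what you found, and it makes the theorem vacuous: the first $\varphi$-step would already be optimal. Your globally integrated Procrustes rotation is precisely the rigid gauge $Q_0$ that the algorithm factors out.

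Once $Q$ varies in space, three things break:
\begin{itemize}
\item The projector $P$ onto exact forms no longer commutes with right multiplication by $Q$, so $\|\alpha\|^2-\|P(\alpha Q)\|^2$ genuinely depends on $Q$.
\item The identity $\mathrm{tr}(Q^TKQ)=\mathrm{tr}K$ is unavailable.
\item Your fallback fails. At a fixed point, $Q(x)$ being the polar factor of $B_{\tilde\varphi}(x)$ is only the first-order condition for maximizing the convex quadratic $Q\mapsto\|P(\alpha Q)\|^2$ over the nonconvex set of $O(N)$-valued fields. For a convex objective the linearization gives lower bounds, not upper bounds, so this condition does not imply global optimality.
\end{itemize}

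So the non-routine part of the theorem, that limits of the alternation are global minimizers, is not established. Your remark that block descent only yields stationary points is correct. It applies with full force here, since $O(N)$-valued fields do not form a convex set.

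The paper's ingredient for this step is a uniqueness statement for stationary points, Lemma~\ref{lem_hSpace}. It takes two solutions of \eqref{eqApp_ELq}--\eqref{eqAp_ELphi} to be related by an orthogonal field $C$ with $\tilde Q|_2=\tilde Q|_1C$ and $d\tilde\varphi|_2=d\tilde\varphi|_1C$. Integrability, $d\tilde\varphi|_1\wedge dC=0$, together with skew-symmetry of $C^TdC$, then forces $dC=0$ via the alternating symmetric/skew index argument. Hence all stationary points form a single rigid equivalence class, and the descent limit is global.

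Your proposal needs an argument of this kind, in the field setting, that rules out non-global fixed points. Be aware that in the paper the weight rests on the lemma's opening claim that any two solutions are related by such a $C$; that claim is asserted there rather than derived.
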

Consequently, instead of the non-linear and explicitly intractable equation \eqref{eq_ELphi}, one can resolve the linear, Poisson-like equation of  \eqref{eq_ELiter_phi}, recursively aligning its right-hand side with the acquired solutions. Moreover, if global (rigid-body) orthogonal procrustes alignment of $\alpha$ to $d\left.\tilde{\varphi}\right|_{n}$ is removed from the local alignment applied by  $\tilde{Q}|_{n}$ and  $\tilde{\varphi}|_{n+1}$ is normalized to zero mean value, theorem \ref {th_converge} ensures fixed-point convergence of the iterative scheme to a single pair of optimal $\tilde{\varphi}|_{\infty}, {Q}|_{\infty}$. 

\section{Implementation of the method on distance graphs}
\label{Sct_impl}
While $\alpha$ and $d\varphi$ have explicit, on-graph realizations, given by \eqref{eq_alpha} and \eqref{eq_diff}, their inner product and their divergence, lack such realizations and they are necessary for evaluating \eqref{eq_ELiter_q} and \eqref{eq_ELiter_phi}, respectively. These realizations are given in the following Sct. \ref{Sct_impl_div}, thus allowing for the computational implementation of the embedding method derived in Sct. \ref{Sct_Iter}, which is given in Sct. \ref{Sct_impl_alg}

\subsection{Infinitesimal transports}
Formula \eqref{eq_alpha}, derived for the metric-trivilizing transports actually sets the prototype for representing any infinitesimal transport, on distance graphs. So, let $\zeta$ be such a transport, then for any point $p$ in the neighborhood $\Gamma(v_i)$ of a certain vertex $v_i$ of the graph, $\zeta$ may be written as 
\begin{equation}
\label{eq_diff1forms}
\zeta^{v_i}(p)=\sum_{v_j\in\Gamma(v_i)}{\left.d\lambda^{v_i}\right|_{v_j}(p)\left.Z^{v_i}\right|_{v_j}}
\end{equation}
with $\lambda^{v_i}$ induced by the local Euclidean representation \eqref{eq_coords} of $\Gamma(v_i)$. Consequently, an arbitrary infinitesimal transport $\zeta$ is represented by its edge function $Z$, which in turn, is computationally realized by a sparse matrix in the same filling pattern as the underlying graph's adjacency matrix.

Moreover, definition \eqref{eq_alpha} for the metric trivializing transports $\alpha$ reveals the necessity for a computational representation of vectors of infinitesimal transports, which, on-graph, correspond to vectors of edge functions that computationally, are represented by tuples of sparse matrices,  following the filling pattern of the graph's adjacency matrix. So, the local frames $E^{v_i}|_{v_j}$ that determine the on-graph realization of $\alpha$, as determined in Sct. \ref{Sct_mds}, they correspond to $N$ edge functions (the coordinates of the Euclidean embedding of $\Gamma(v_i)$ in $\mathbb{R}^N$), which are computationally represented as a tuple, \texttt{Frm}, of $N$ sparse matrices, \texttt{F}, in the same filling pattern as the distance graph's adjacency matrix. This representation is identified by the correspondence
\begin{equation}
\label{eq_frmE}
E^{v_i}|_{v_j}\equiv\left[\left(\text{\texttt{Frm}}(k).~\text{\texttt{F}}\right)^{v_i}|_{v_j}\right]_{k=1...N}
\end{equation} 

\subsection{Inner product, divergence and Laplacian}
\label{Sct_impl_div}
The key operation, whose on-graph realization enables the implementation of the iterative embedding scheme of \eqref{eq_ELiter} is the inner product of the infititesimal transports .  In appendix \ref{appA}, using the representation of \eqref{eq_diff1forms}, the on-graph realization of the inner product between two arbitrary transports $\zeta$ and $\eta$ is derived. Moreover, there is given the evaluation of the corresponding formula as a vector product of $N$ edge functions. The constituent elements of this evaluation are a) the sparse matrices $Z,H$ that represent $\zeta,\eta$, respectively and b) the $N$ edge functions ${E_{-}}^{v_i}|_{v_j}\in \mathbb{R}^N$ that result from the local frames $E^{v_i}$ of \eqref{eq_LclFrms} as
\begin{equation}
\label{eq_LclFrmsInv}
{E_{-}}^{v_i}|_{v_j} = \sqrt{\mathrm{vol}(\Gamma(v_i))} V^{v_i}|_{v_j}(S^{v_i})^{-1/2}\mathbf{1}_N
\end{equation}
where, again, $\mathbf{1}_N$ denotes a (\# of neighbors) - by - $N$ identity matrix, selecting the largest $N$ eigenvalues of $S^{v_i}$ (please refer to \eqref{eq_LclFrms}) and $\mathrm{vol}(\Gamma(v_i))$ denotes the volume of the $N$-dimensional convex hull of $\Gamma(v_i)$. In correspondence with the computational representation \eqref{eq_frmE} of $E^{v_i}|_{v_j}$, ${E_{-}}^{v_i}|_{v_j}$ is represented as a tuple, \texttt{FrmInv}, of $N$ edge functions \texttt{F}, via the correspondence
\begin{equation}
\label{eq_frmInvE}
{E_{-}}^{v_i}|_{v_j}\equiv\left[(\text{\texttt{FrmInv}}(k).~\text{\texttt{F}})^{v_i}|_{v_j}\right]_{k=1...N}
\end{equation} 
On the basis of these data, the inner product of the arbitrary infinitesimal transports $\zeta$ and $\eta$ is implemented as
\begin{equation}
\label{eq_innrProdMat}
\langle\eta,\zeta\rangle=\sum_{k=1...N}{(\text{\texttt{sum}}\left(\text{\texttt{FrmInv}}(k).~\text{\texttt{F}} \odot H,\text{\texttt{dim}}=2\right) \odot (\text{\texttt{sum}}\left(\text{\texttt{FrmInv}}(k).~\text{\texttt{F}} \odot Z,\text{\texttt{dim}}=2\right)}
\end{equation}
where $\odot$ denotes the element-wise product of matrices and \texttt{sum} denotes the summation of matrix elements, along the dimension \texttt{dim}.

Using the on-graph realization of the inner product and \eqref{eq_diff} for  the functions' differential, in appendix \ref{appA}, the on-graph realization of the divergence operator is determined, as the inner product - dual of the differential. The computational evaluation of the operator follows from \eqref{eq_innrProdMat} as
\begin{multline}
\label{eq_div}
\mathrm{div}(\zeta)=\sum_{k=1...N}{{\text{\texttt{FrmInv}}(k).\text{\texttt{F}}}^T \text{\texttt{sum}}\left(\text{\texttt{FrmInv}}(k).\text{\texttt{F}} \odot Z,\text{\texttt{dim}}=2\right)}\\
-\sum_{k=1...N}{\text{\texttt{sum}}\left(\text{\texttt{FrmInv}}(k).\text{\texttt{F}} , \text{\texttt{dim}}=2\right) \odot \text{\texttt{sum}}\left(\text{\texttt{FrmInv}}(k).\text{\texttt{F}} \odot Z,\text{\texttt{dim}}=2\right)}
\end{multline}
Using \eqref{eq_div} and substituting in $\zeta$ a function's differential, $df$, as realized on distance graphs via \eqref{eq_diff}, one has $Z^{v_i}|_{v_j}=f|_{v_j}-f|_{v_i}$, thus obtaining the implementation of the Laplacian operator 
\begin{multline}
\label{eq_lplc}
\mathcal{L}=\sum_{k=1...N}{{\text{\texttt{FrmInv}}(k).\text{\texttt{F}}}^T \text{\texttt{FrmInv}}(k).\text{\texttt{F}}}
+\texttt{diag}(( \text{\texttt{sum}}\left(\text{\texttt{FrmInv}}(k).\text{\texttt{F}},\text{\texttt{dim}}=2\right))^2)\\
- \text{\texttt{sum}}\left(\text{\texttt{FrmInv}}(k).\text{\texttt{F}},\text{\texttt{dim}}=2\right) \odot  (\text{\texttt{FrmInv}}(k).\text{\texttt{F}}+{\text{\texttt{FrmInv}}(k).\text{\texttt{F}}}^T)
\end{multline}
where \texttt{diag} turns its input vector to a sparse diagonal matrix and the squaring evident in its argument is supposed to act element-wise. In both, \eqref{eq_div} and \eqref{eq_lplc} the element-wise matrix operations $+,-,\odot$ are considered to act as their vectorized implementation do, namely by operating column- or row-wise if one of their operands is a column or resp. row vector.

\subsection{The embedding algorithm}
\label{Sct_impl_alg}
\textbf{Input}: $A$ - the distance graph's adjacency matrix, $N$ - the dimension of the sought representation.

\textbf{Initialization}: Via the process of Sct. \ref{Sct_mds}, compute the local Euclidean frames $E^{v_i}$ and use them to estimate the volume $\mathrm{vol}(\Gamma(v_i))$ of the corresponding neighborhoods $\Gamma(v_i)$. Then, via \eqref{eq_LclFrmsInv}, compute ${E_{-}}^{v_i}$. Collect $E$ and $E_{-}$ correspondingly to the tuples \texttt{Frm} and \texttt{FrmInv}, each containing $N$ sparse matrices following the filling pattern of $A$. Using \texttt{FrmInv}, compute the Laplacian matrix $\mathcal{L}$, via \eqref{eq_lplc}. This is a sparse matrix following the filling pattern of $A^2$. Use $\mathcal{L}$ to compute a pre-conditioner (here a thresholded incomplete Cholesky factorization (ICT)) for the iterative sparse solver employed to invert it (here the Preconditioned Conjugate Gradient (PCG)). Initialize the $\text{\#-vertices}\times N\times N$ matrices $\tilde{Q}$, with $\tilde{Q}_{i,*,*}=I_{N\times N}$, for all $i=1...\text{\#-vertices}$ and \texttt{a}, with $\texttt{a}_{*,k,m} = \texttt{sum}\left(\texttt{FrmInv}(k).\texttt{F} \odot \texttt{Frm}(m).\texttt{F},\texttt{dim}=2\right)$.

\textbf{Recursion}: Compute the $\text{\#-vertices}\times N$ matrix \texttt{DaQ}, evaluating $\mathrm{div}(\alpha \tilde{Q})$, by applying \eqref{eq_div} in the form $\texttt{DaQ}_{*,m}=\sum_{k=1}^N{{\texttt{FrmInv}(k).\texttt{F}}^T \texttt{a}_{*,k,m}-\texttt{sum}\left(\texttt{FrmInv}(k).\texttt{F},\texttt{dim}=2\right)\odot\texttt{a}_{*,k,m}}$. Use a preconditioned iterative sparse solver to obtain the $\text{\#-vertices}\times N$ matrix $\tilde{\varphi}$ that results from the solutions of $\mathcal{L}\tilde{\varphi}_{*,m}=\texttt{DaQ}_{*,m}$, which evaluate \eqref{eq_ELiter_phi} on distance graphs. Use equation \eqref{eq_innrProdMat} to compute the $\text{\#-vertices}\times N\times N$ matrix $B_{\tilde{\varphi}}$ of \eqref{eq_ELq} in the form $[B_{\tilde{\varphi}}]_{*,m,n}=\sum_{k=1}^N{\left({\texttt{FrmInv}(k).\texttt{F}}^T \tilde{\varphi}_{*,m}-\texttt{sum}\left(\texttt{FrmInv}(k).\texttt{F},\texttt{dim}=2\right)\odot\tilde{\varphi}_{*,m}\right)\odot\texttt{a}_{*,k,n}}$. Apply SVD to each $N\times N$ matrix $[B_{\tilde{\varphi}}]_{i,*,*}$ and obtain via \eqref{eq_ELq} the corresponding elements $Q_{i,*,*}$ of the candidate update $Q$ of the matrix $\tilde{Q}$. Compute the rigid-body rotation $Q_0=R_0{L_0}^T$, resulting from the $(L_0,S_0,R_0)=\mathrm{svd}(\texttt{sum}\left(B_{\tilde{\varphi}},\texttt{dim}=1\right))$ and absorb it, setting $Q\leftarrow {Q_0}^T Q$. Evaluate the convergence error as $\texttt{err}={\underset{i}{\text{mean}}\left({\|{\tilde{Q}_{i,*,*}}^T{Q_{i,*,*}}-I\|_{2,2}}^2\right)}^{1/2}$ and break the recursion if $\texttt{err}$ falls below a predefined precision $\texttt{tol}$. Otherwise, update $\tilde{Q}_{i,*,*}\leftarrow \tilde{Q}_{i,*,*} Q_{i,*,*}$, $\texttt{a}_{i,*,*}\leftarrow\texttt{a}_{i,*,*}Q_{i,*,*}$ and iterate.

\textbf{Output}: $\tilde{\varphi}$ - the embedding of the graph's vertices, $\tilde{Q}$ - the orthogonal alignment matrices

\textbf{Complexity}: \textit{Memory complexity} is set by the non-zero elements of the Laplacian \eqref{eq_lplc}, which is $O(\text{\# - vertices})$. The computation of $\mathrm{vol}(\Gamma(v_i))$ could dominate this, if convex hull computations were held for large $N$. So, for $N\ge 7$, Kubota's projection formula is applied to 7-dimensional orthogonal projections of  $\Gamma(v_i)$. \textit{Time complexity}  of the recursion is set by the preconditioned iterative solution of the the sparse system that determines $\tilde{\varphi}$, which is $O((\text{\# - vertices})^{1+\epsilon})$, with $\epsilon\in[0,1/N)$,  controlled by the preconditioner.

\section{Experimental evaluation of the methodology}
\label{Sct_exprmnt}
\begin{table}
  \caption{Evaluation of synthetic datasets' representations' quality}
  \label{tab_swiss}
  \centering
  \begin{tabular}{p{1.2cm}p{0.7cm}p{0.7cm}p{0.8cm}p{0.7cm}p{1cm}p{0.8cm}p{0.8cm}p{0.8cm}p{0.7cm}p{1.1cm}}
    \toprule
    &\multicolumn{5}{c}{Klein's bottle}  & \multicolumn{5}{c}{Torus}  \\
    \cmidrule(r){2-11}
     &Lcl-mtrc  &Lcl-F1 &Glbl-Pearson &Glbl-mtrc &Glbl-prm &Lcl-mtrc  &Lcl-F1 &Glbl-Pearson &Glbl-mtrc &Glbl-prm  \\
    \midrule
    LLE & 0.30  & 0.59 &0.66 &1.00 & 0.09 & 0.45  & 0.50 &0.69 &0.98& 0.61\\
    HLLE & 3.88  & 0.23 &0.07 &0.90 & 0.80& 13.5 & 0.16 &0.06 &1.25& 0.97 \\
    LTSA &0.30  & 0.16 &0.07 &1.00 & 1.00 & 0.52 & 0.19 &0.05 &0.98& 1.00 \\
    LplcMps & 0.30  & 0.61 &0.67 &1.00 & 0.09 & 0.44  & 0.47 &0.77 &0.99& 0.67   \\
    DffMps & 0.27  & 0.86 &0.76  &0.75 & 0.03 & 0.83  & 0.56 &0.87 &0.25& 0.45   \\
    IsoMap & 0.77  & 0.81 &0.98  &0.09 & 0.00 & 0.60  & 0.55 &0.94  &0.14 & 0.67  \\
    proposed & 0.44  & 0.91  &0.95  &0.23 & 0.07 & 1.14  & 0.76  &0.80  &0.41 & 0.48\\
    \bottomrule
  \end{tabular}
\end{table}
\begin{figure}
  \centering
 \includegraphics[width=1\linewidth]{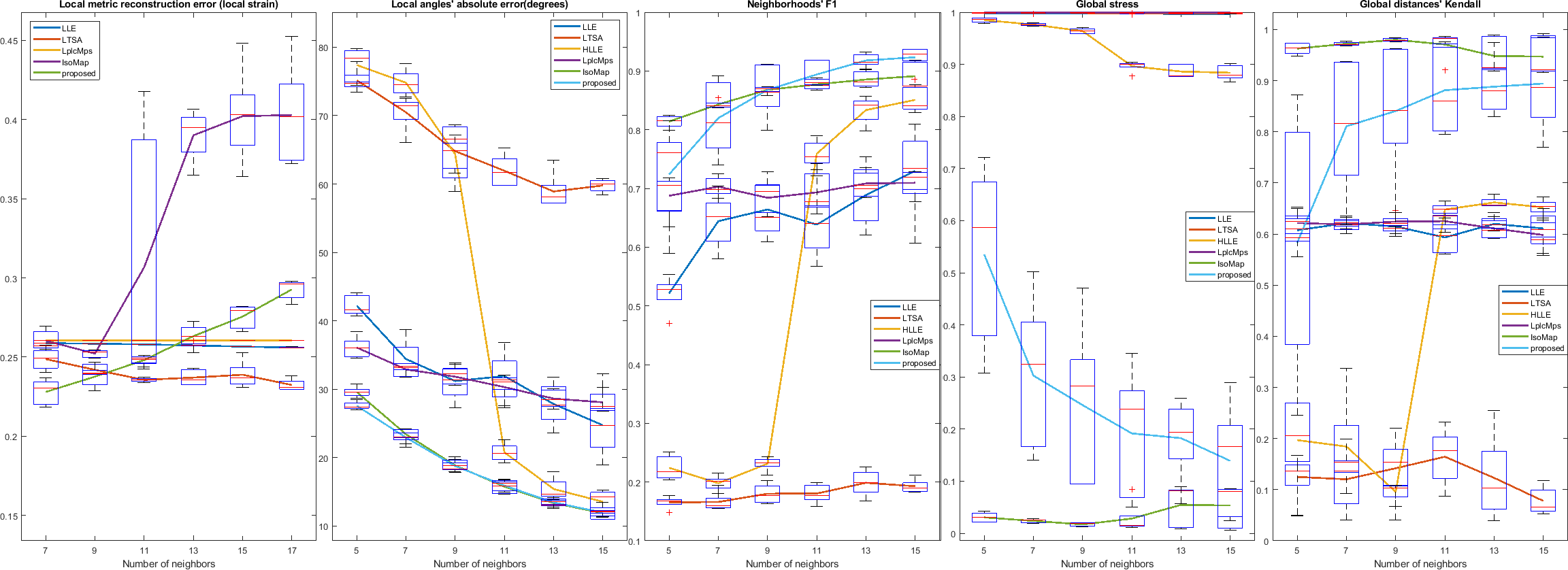}
  \caption{Swiss roll unfolding experiment for varying neighborhood size. The measures' box plots depict the maximum/minimum boundaries at whiskers, the boxes of the populations' 25th-75th percantile and the median values in red. The plotted lines depict the variation of the metrics' mean values with respect to the neighborhood size. At each neighborhood size, 15 random samplings were performed, each of 1500 datapoints. The proposed method is the only local one, approximating IsoMap in the metrics of global consistency (Global stress that IsoMap minimizes and Kendall correlation between the Euclidean and the original geodesic distances), while retaining local consistency of neighborhoods' representation.}
\label{fig_swiss}
\end{figure}
\begin{figure}
  \centering
 \includegraphics[width=1\linewidth]{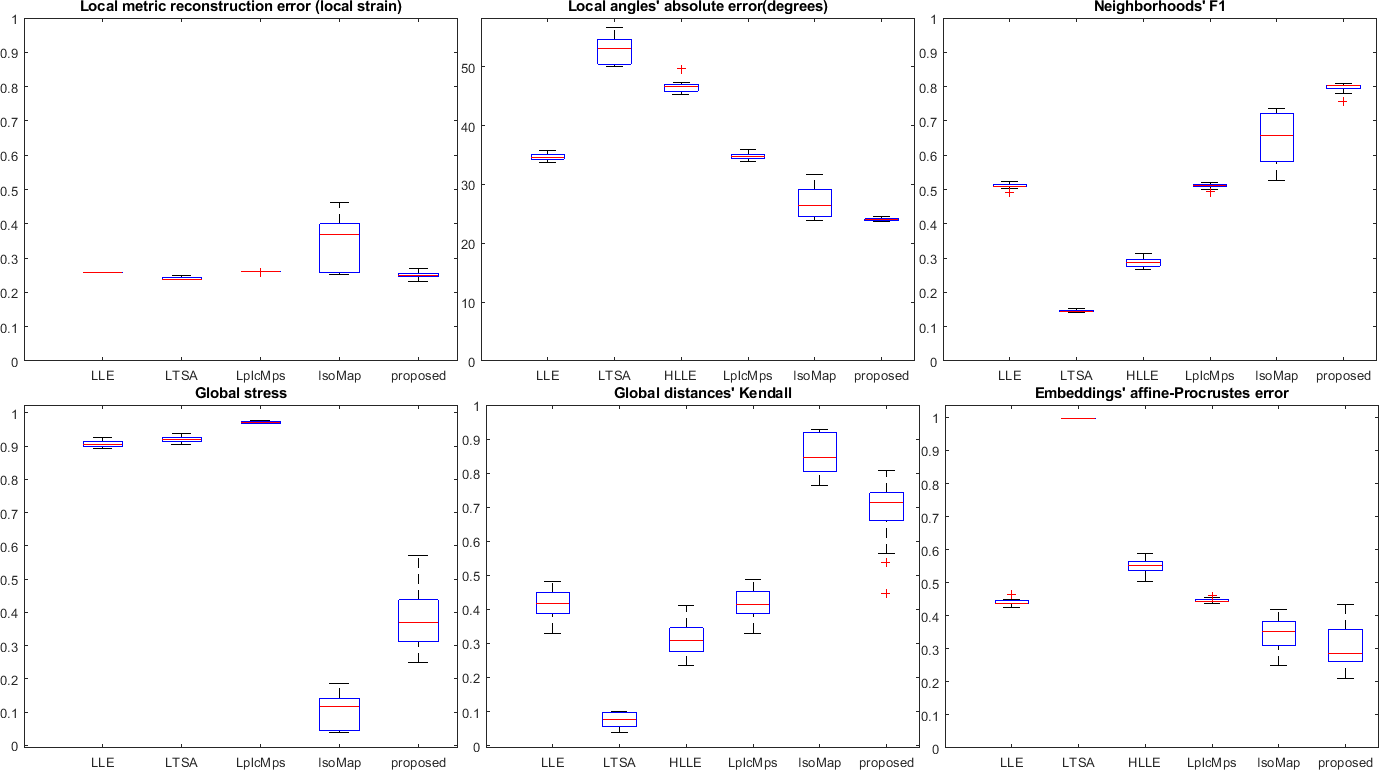}
  \caption{Recovery of the parameterization of a 5-dimensional manifold embedded in 10-dimensional space. Fifteen random samplings were performed, each of 1500 datapoints. The measures' box plots depict the maximum/minimum boundaries at whiskers, the boxes of the populations' 25th-75th percantile and the median values in red. The proposed method is the only local one, approximating IsoMap in the metrics of global consistency (Global stress that IsoMap minimizes and Kendall correlation between the Euclidean and the original geodesic distances), while retaining local consistency of neighborhoods' representation.}
\label{fig_difficult}
\end{figure}
The experimental setup starts by determining a proper distance graph, common for the proposed and the baseline methods. This is driven by the determination of each dataset's intrinsic dimension, $N$, which is estimated by the method of \citep{ref_intrdim}, on the basis of the two nearest neighbors' distances' ratio.  Then, the $N$ - nearest neighbors' graph is determined and its adjacency matrix $A$ is rendered symmetric by $A\leftarrow\max(A,A^T)$, so that it corresponds to a distance graph. 

In the cases of low intrinsic dimension($N\leq 5$), we have also tested the method's  performance as the number of neighbors varies. Namely, in each case, we have considered the least number of neighbors that allowed any of the \textit{local spectral methods} to approximate the genuine manifold's parameterization

Concerning the convergence tolerance, employed in the performed experiments, for the proposed embedding algorithm, we have left the algorithm fully converge to the single precision machine epsilon, $10^{-7}$ and to the double precision one, $10^{-16}$, for the convergence error sequential variation.
\subsection{Datasets, baseline methods and evaluation metrics}
The proposed embedding method has been tested against two requirements : a) local and global geometric fidelity and b) maintenance or enhancement of the data discrimination.

The metrics selected to evaluate a) are for the \textit{local characteristics}: \textbf{Lcl-dist}: local distances' absolute relative error, \textbf{Lcl-ang}: local frames' angles' absolute error in degrees, \textbf{Lcl-mtrc}: strain of the local metric reconstruction, \textbf{Lcl-cont} and \textbf{Lcl-trust}: neighborhoods' continuity and trustworthiness,  \textbf{Lcl-prec}, \textbf{Lcl-rec} and \textbf{Lcl-F1}: neighborhoods' precision, recall and F1 score, while for the \textit{global characteristics}: \textbf{Glbl-mtrc} stress of the reconstructed geodesic distances, \textbf{Glbl-corr}: Pearson, Spearman and Kendall correlation between the Euclidean and the geodesic distances, \textbf{Glbl-prm}: procrustes error of the affine alignment between the determined embedding and  the underlying manifold's parameterization (only for synthetic datasets). 
On the other hand, to evaluate the capability of the embedding to discriminate data clusters, for each data point of a labeled dataset, we consider its $N$ - nearest neighbors, with respect to the tested embedding. Then, the label of each neighbor is weighted by the logistic function of its Euclidean distance from the query point. The query point  obtains the label of  the maximum average weight. The data points' grouping is measured by \textbf{ACC}: its accuracy, \textbf{NMI}: its Normalized Mutual Information with  the ground truth and \textbf{ARI}: its Adjusted Rand Index.

The embedding method has been evaluated on both synthetic and real data and compared to representative embedding methods. Specifically, the dimensionality reduction toolbox of \citep{toolbox} is employed, using its implementations of local embedding methods, with the exception of UMAP implementation \citep{ref_umapmat}. Namely, additionally to UMAP and t-SNE \citep{ref_tsne}  from the class of stochastic local methods, the proposed method is compared to IsoMap \citep{ref_isomap} and Diffusion Maps (DffMps) \citep{ref_diffusion}, which are global methods, expected to perform better than the proposed one in reproducing the global manifold structure and LLE \citep{ref_lle}, LTSA \citep{ref_ltsa}, Laplacian Eigenmaps (LplcMps) \citep{ref_laplacian} and Hessian Eigenmaps (LplcMps) \citep{ref_hess}, which are local methods expected to distort global manifold structure. For fairness, the local methods' implementations have been properly modified so as to operate on graphs rather than features vectors, following the adaptation proposed in \citep{ref_lle1}.

The considered \textit{synthetic datasets} are: the swiss roll ($\mathbb{R}^{3}\rightarrow\mathbb{R}^{2}$), the "difficult" dataset of \citep{toolbox} ($\mathbb{R}^{10}\rightarrow\mathbb{R}^{5}$), the flat torus ($\mathbb{R}^{4}\rightarrow\mathbb{R}^{3}$) and the Klein's bottle ($\mathbb{R}^{4}\rightarrow\mathbb{R}^{3}$) , while the considered \textit{real datasets} are the test sets of MNIST and FMNIST images' datasets  ($\mathbb{R}^{784}\rightarrow\mathbb{R}^{13}$) and an RNA-seq data set, related to lung cancer and collected from \citep{pbmc} (1625 cases with 51 different diagnoses, $\mathbb{R}^{60660}\rightarrow\mathbb{R}^{30}$ embedding).

\subsection{Results}

\begin{figure}
  \centering
 \includegraphics[width=1\linewidth]{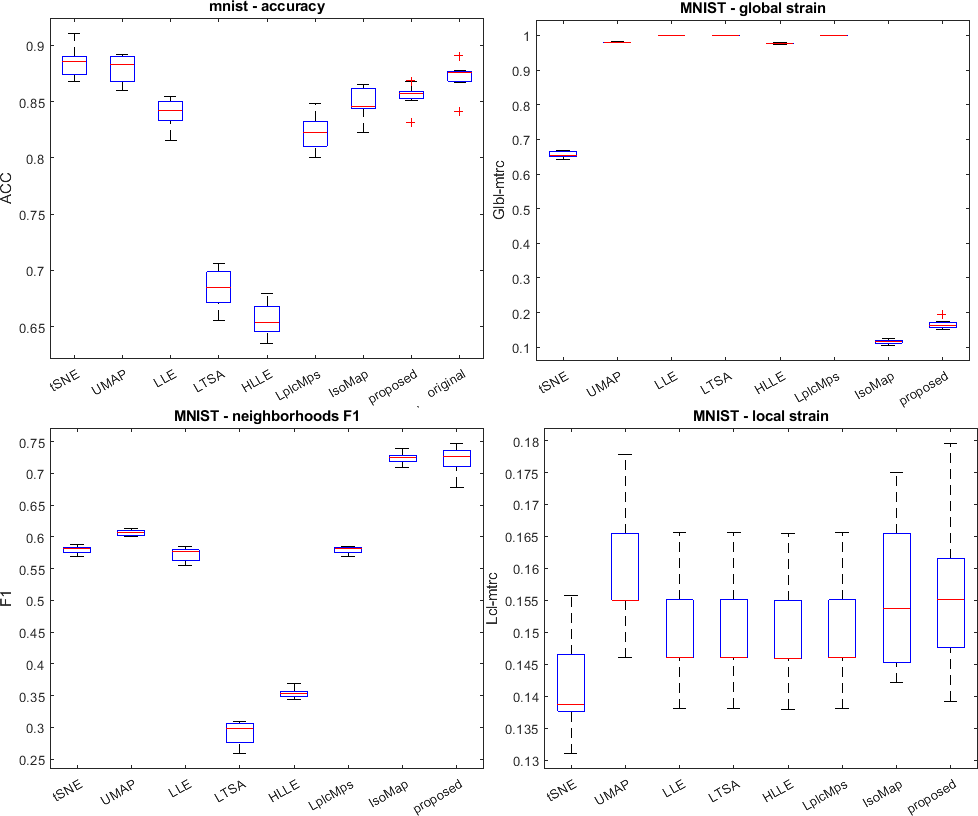}
  \caption{MNIST dataset 10-fold experiment. The measures' box plots depict the maximum/minimum boundaries at whiskers, the boxes of the populations' 25th-75th percantile and the median values in red. Possible outliers are marked by red crosses. The proposed method performs practically the same as IsoMap, either in terms of clustering or local/global geometric consistency. The stochastic methods slightly outperform their clustering accuracy. Probably outliers within neighborhoods confine this result, since also the classification according to the "original" neighborhoods of the graph is outperformed by the stochastic methods.}
\label{fig_mnist}
\end{figure}

\begin{table}
  \caption{Evaluation of the clusters' representation for the RNA-seq data set, related to lung cancer}
  \label{tab_pbmc}
  \centering
  \begin{tabular}{p{1.5cm}p{1cm}p{1cm}p{1cm}p{1cm}p{1cm}p{1cm}p{1.2cm}}
    \toprule
    &\multicolumn{7}{c}{Diagnosis classification within embeddings' neighborhoods}  \\
    \cmidrule(r){2-8}
     &LLE  &LTSA &LplcMps &IsoMap &t-SNE &UMAP &proposed  \\
    \midrule
    ACC & 0.6954 & 0.6640 & 0.6985 & 0.6825 & 0.6917 & 0.7040 & 0.7126\\
    NMI & 0.2934 & 0.2660 & 0.3006 & 0.2811 & 0.3048 & 0.3213 & 0.3298\\
    ARI   & 0.3356 & 0.2525 & 0.3525 & 0.3020 & 0.3461 & 0.3742 &0.3821\\
\midrule
&\multicolumn{7}{c}{Local/global geometric consistency}  \\
    \cmidrule(r){2-8}
    Lcl-F1 & 0.5258 & 0.3099 & 0.5410 & 0.5822 & 0.6788 & 0.7160 & 0.7698\\
    Lcl-mtrc & 0.0756 & 0.0752 & 0.0761 & 0.1087 & 3.2433 & 0.1788 & 0.0566\\
    Glbl-mtrc & 0.9428 & 0.9383 & 0.9903 & 0.1519 & 34.28 & 3.1886 & 0.1589\\
    \bottomrule
  \end{tabular}
\end{table}
\textit{Synthetic datasets' results} : The widely used case of "swiss roll" surface unfolding is considered, varying the number of neighbors and  repeating the random sampling 15 times per neighborhood size. The evaluation of the embeddings' quality is summarized in Fig. \ref{fig_swiss}. There the proposed method is compared with standard local embedding methods and the globally optimal (in terms of isometry) IsoMap, with respect to measures of local (metric, angular and neighbors') and global (Euclidean -- to -- geodesic distances' stress and correlation) consistency. The proposed is the only local method approximating the performance of IsoMap, in terms of global consistency, while retaining the local  geometric data and the neighboring relations. Local metric conservation is adversarial to the integrability of the computed embedding, due to the curvature of the underlying manifold. After all, the deformation of the local metric for the sake of integrability is the error functional minimized by the proposed method. Consistently, the proposed method results to a more conservative local metric distortion than IsoMap, however not following the other local methods' invariance with respect to the neighborhood size. Actually, while the number of neighbors increases, the intra-neighborhood distances deviate more intensely from the true geodesic ones and if a globally accurate isometric embedding is recovered, these distances should be stretched to approximate their true geodesic counterparts.

From the synthetic datasets provided in \citep{toolbox}, the "difficult" dataset, which is a 5-dimension manifold embedded in $\mathbb{R}^{10}$ 
is designed to challenge the dimensionality reduction methodologies' capability to recover the underlying parametrerization in $\mathbb{R}^5$. The embeddings' evaluation with respect to this manifold is is summarized in Fig. \ref{fig_difficult}. The presented results correspond to 15 random samplings, each of 1500 points and the graph that underlies the computed embeddings is the minimal graph that achieves an intrinsic dimension 5 within all neighborhoods. The local/global geometric consistency evaluations pratctically follow the results obtained in the swiss roll unfolding experiments; the proposed method best reserves neighborhoods' data, while approximating the global geometric consistency of IsoMap. Within the latter evaluation framework, we should refer to the performance of the proposed method in the task of recovering the original manifold's parameterization. The proposed method's embedding best fits this paramaterization (affine-Procrustes error subfigure), probably favored by the integrability constraint imposed to the embedding by the considered optimization problem.

\textit{Real datasets' results} : The most characteristic case, where embeddings' loca/global geometric consistency affects data clustering is the RNA-seq dataset (table \ref{tab_pbmc}). There, the 51 different diagnoses' classes are highly imbalanced and not fully separated by global distances. The consistency between the embeddings' neighborhoods and the original ones (F1 score)
is the key element for reasoning the embeddings' clustering capability. Local distances are given only to identify the local/global geometric consistency and its correlation with the neighborhoods' consistency.  One may observe in table \ref{tab_pbmc} that for methods retaining local and/or global distance data, plausibly, local consistency is  the key for neighborhoods' consistency. IsoMap distorts it by collapsing neighboring geodesics, thus being outperformed. t-SNE and UMAP, by design, do not preserve distances. The proposed method's embedding succeeds in best representing, local structures thus achieving best clusters' representation, while it's global geometric consistency is practically the one of IsoMap.

The MINST and FMNIST datasets' test sets were used to test the stability of the embedding methods under sparse random sampling within high dimensional spaces. Specifically, the 10k data points of these datasets were randomly split in 10 disjoint groups and the embedding methods have been evaluated on these groups. The MNIST results are summarized in Fig. \ref{fig_mnist} and the ones of FMNIST are included in appendix \ref{appD}. The FMNIST results practically follow the ones of RNA-seq dataset, indicating that the preservation of the neighborhoods' structure (F1 score) sets the baseline for the embeddings' capability to represent data clusters. This is not that clear-cut in the MNIST results. While, the proposed method and IsoMap are top-scored both in terms of neighborhood preservation (F1 score) and in terms of global distances' reconstruction (global strain), they are outperformed by the density based methods, t-SNE and UMAP, in the accuracy of data clusters' representation. In this case, the clustering capability of the original graph's neighborhoods sets the barrier for the performance of the proposed method and IsoMap that prserve and arange them according to the original graph structure. On the other hand, t-SNE and UMAP modify the neighborhoods and arrange them according to the homogeneity of the data points' density, which represents clustering better than the actual datapoints' distances.

\section{Limitations and extensions}
The fundamental restriction that the proposed approach imposes to the data embedding problem is that any sought data - embedding correspondence should be encoded in terms of a symmetric relation between datapoints that are considered neighboring according to his relation. This setting fundamentally excludes requirements in connection with relations between non-neighboring datapoints (e.g. negative sampling, Barnes–Hut approximation) and even worse, requirements about the global shape of data. Such requirements are crucial for algorithms like UMAP \citep{ref_umap} and t-SNE \citep{ref_tsne} that are highly efficient in representing data clustering. In terms of the proposed approach, the restriction to local-only relations is a cost paid in favor of the variational formulation of the embedding problem and the requirement that the method does not rely on dense pairwise data. 
Namely, incorporation of distant relations in a variational formulation would lead to covariance-like operators that are densely represented on data, thus violating the sought sparsity of the processed data. Concerning the cases that non-symmetric relations are adopted (e.g. k-nn graphs),  they can be relaxed either by forcing symmetry (e.g. for k-nn graphs) or by inserting points (e.g. in cases that bidirectional edges have different weights).

The consistency between the operators included in the variational formulation of the embedding problem and the locality of the distance graph's edges also imposes the main technical challenge to the implementation of the optimal embedding method of Sct. \ref{Sct_mthd}. Though, the iterative scheme of \eqref{eq_ELeqs} is theoretically guaranteed to converge to a globally optimal embedding, the on-graph realization of the involved differential operators is up to a second order approximation. Thus the presence of outliers in the neighborhoods of the distance graph, though it does not affect the convergence, it may result to poor operators' estimates, thus affecting the quality of the embedding.

Concerning the proposed method's computational profile, its memory demands remain linear in the number of datapoints, $K$, in accordance with all local embedding methods. However, its time complexity, per iteration, is between $O(K)$ and  ${O(K)}^{1+1/\text{dimensions}}$, while UMAP \citep{ref_umap} and LargeVis \citep{ref_largevis} iterations' time demands are $O(K)$. Shifting to an algebraic multigrid (AMG) preconditioner for the Laplacian inversion (Sct. \ref{Sct_impl_alg})  guarantees $O(K)$, however imposing challenges in the convergence of the iterative scheme, due to the non-stationarity of the AMG preconditioning, across iterations.

Finally, the proposed on-graph implementation of the optimal embedding scheme of \eqref{eq_ELeqs}, though general in terms of data diversity and  sparse in terms of data volume, it necessarily remains recursive and non-parametric. As a consequence, the embedding of new data points, requires their incorporation in the distance graph and usage of Nyström approximation to extend the embedding. To restore optimality of these extensions, we should use them to extend the alignment matrices $Q|_\infty$ and then repeat the iterative embedding scheme updating only the new points. As it becomes evident, such an inference is computationally expensive,  fetching the whole training data for each out of sample extension. Such a deficiency is common in pointwise, non-parametric estimation cases, since such estimations lack a functional form covering the whole data space. 

On the other hand, parametric embedding schemes that provide such a functional form, require vector representations for the processed data, thus lacking the representation-free feature that allows for operating heterogeneous data. Autoencoders \citep{ref_autoenc}, \citep{ref_autoencVar}, is the source point of the parametric, non-linear data embedding. The objectives optimized by the learned models focus on reconstructing the data from a mapping (encoder) that embeds them in a low-dimensional vector space. The coupling between the embedding and the reconstruction (decoder) expands the class of the optimal embeddings to the whole class of diffeomorphisms, having only regularization of the embedding to impose restrictions within the class. Thus, local (differential) or global (integral) data - embedding correspondences cannot be imposed in the Autoencoders' objective function. Shifting to deep architectures (convolutional \citep{ref_neurop} or attention-based \citep{ref_attention}) partially resolves this, if the datapoints' graph is fixed and gridded. To cover the fully intrinsic case of irregular data graphs, one should shift to the graph deep learning  framework (e.g. Graph Attention \citep{ref_gAttn}), however not avoiding the $O(\log(K))$ graph update step per inference. 

A hybrid class of parametric methods emerges by minimally adapting the local point-wise embedding methods, to learn shallow neural networks, minimizing each method's objective function. These adaptations adopt the data graph only for creating cost function, used during the learning process. The models consider  vector data representations as inputs as Autoencoders also do.

The proposed approach is not inherently bound to non-parametric estimation, since it fundamentally relies on the functional equation of the optimal embedding. The operators involved in these equations can be analytically evaluated on parametric models, thus enabling a possible reduction of the original operator-learning problem  to a mapping-learning one. This is a technically involved extension, both in terms of theory and implementation and it is the major direction of our on-going research effort.




\medskip

{
\small
\bibliographystyle{unsrtnat}
\bibliography{my_ref_paper}


\appendix

\section{Inner product of differential 1-forms and their divergence}
\label{app0}
Concerning the notation adopted for representing vectors, matrices and tensors and their contractions, Einstein summation convention is used and in the matrix form version of the expressions, the vectors are treated as columns.

An 1-form $\eta$ on an $N$-dimensional manifold $\Omega$ can always  be written as a linear combination of the manifold's local coordinates' differentials, e.g. for an $N$-dimensional manifold with $dx=[dx_i]_{i=1...N}$, $\eta$ reads $\eta=dx_i h^i=dx^T h$, where $\eta=[h_i]_{i=1...N}$ is the tangent vector of the corresponding infinitesimal transport, with respect to the selected local coordinates. 

Hodge $\star$ operator, allows for taking the space of transports orthogonal to the ones included in the operated differential form. Acting on the 1-form $\eta$, the operator returns an $(N-1)$-form, $\star\eta=\star dx_i h^i$, having $\eta$ as normal vector.  This is achieved if for each $dx_i$, $\star dx_i$ corresponds to differential transports normal to $dx_i$, which is equivalent to asking that for each $dx_j$, the exterior (wedge) product $dx_j\wedge \star dx_i$ equals the interior product of $dx_j$ and $dx_i$. If such an identification is achieved, then, using the Hodge operator, one can determine the inner product of differential forms, point-wise as $\langle \eta_1 , \eta_2 \rangle =\eta_1 \wedge \star\eta_2 = \eta_2 \wedge \star\eta_1$, which, in turn allows for identifying the squared $L^2$ norm of a differential form as $\langle \eta , \eta\rangle$

As a $(N-1)$-form, $\star dx_i$ can be expressed as a linear combination of the differential $(N-1)$-forms $[dx_{k^*}]_{k=1...N}$, where $dx_{k^*} : dx_k\wedge dx_{k^*}=dx_1\wedge...\wedge dx_N$. So, let $\star dx_i =dx_{k^*}  \gamma^k_i$ for a proper matrix of coefficients $\gamma$. Then, $dx_j\wedge \star dx_i=\gamma^j_i (dx_1\wedge...\wedge dx_N)$ should equal the interior product of $dx_j$ and $dx_i$. By the definition of the Riemann metric, let it be $g$, the interior  product of  the manifold's local coordinates' differentials reads $\langle dx_i , dx_j\rangle=(g^{-1})^j_i d\mathrm{vol}$, while their exterior (wedge) product reads $dx_1\wedge...\wedge dx_N=d\mathrm{vol} \sqrt{\det g^{-1}}$, with $d\mathrm{vol}$ denoting the volume form of the manifold. Consequently, the coefficients' matrix $\gamma$ reads $\gamma =g^{-1}\sqrt{\det g}$, thus allowing us to write
\begin{equation}
\star\eta=\sqrt{\det g} ~ h_i (g^{-1})^i_j dx_{j^*}
\end{equation}

Having defined the $\star$ operator that allows to evaluate the inner product of differential 1-forms point-wise, integration of this evaluation over the whole manifold $\Omega$ determines the inner product as a functional, in the form
\begin{equation}
\label{eqApp_innerProdDef}
(\eta_1,\eta_2)\equiv\int_{\Omega}{\langle \eta_1, \eta_2\rangle}\equiv\int_{\Omega}{\eta_1\wedge\star\eta_2}=\int_{\Omega}{{h_1}^T g^{-1}h_2 d\mathrm{vol}}
\end{equation}

In turn, this functional form of the inner product and the generalized Stokes' theorem allow for determining the dual action on the 1-forms of the differential of scalar functions (0-forms). Namely, if we consider the inner product of such a differential $df$ with an 1-form $\eta$, Stokes theorem allows us to write $(df,\eta)\equiv \int_{\Omega}{df\wedge\star\eta} =\int_{\partial \Omega}{f\star\eta} -\int_{\Omega}{f d\star\eta} =\int_{\Omega}{f (d\mathrm{\chi}_\Omega\wedge\star\eta - d\star\eta)}=(f,d\mathrm{\chi}_\Omega\wedge\star\eta - d\star\eta)$, where $\mathrm{\chi}_\Omega$ stands for the characteristic function of $\Omega$. Then the sought duality, asks for determining an operator $\mathrm{div}$, acting on differential 1-forms, such that $(df,\eta)=(f,\mathrm{div}(\eta))$. Consequently, the divergence operator $\mathrm{div}$ that satisfies this duality reads
\begin{equation}
\label{eqApp_divNeumann}
\mathrm{div}(\eta) = d\mathrm{\chi}_\Omega\wedge\star\eta - d\star\eta
\end{equation}
Since $d\mathrm{\chi}_\Omega$ results to the Dirac evaluator of $\star\eta$, \eqref{eqApp_divNeumann} determines the divergence of $\eta$ as the co-differential operator $d\star\eta$ in $\Omega$ and on $\partial\Omega$, as the inner product $\eta|_{\partial\Omega}$  of $\eta$ with the normal of $\partial\Omega$.

Then, equations of the form $\mathrm{div}(\eta) =\rho d\mathrm{vol}$ correspond to continuity equations with Neumann boundary conditions, thus allowing to simultaneously represent the equation and its boundary conditions with a single operator.

\section{Representation of differential 1-forms, their inner product and their divergence on distance graphs}
\label{appA}
Since differential 1-forms correspond to the integrands of line integrals and the graph's fundamenal line elements are its edges,  the intrinsic basis of differential 1-forms should be associated with the lines underlying graph edges. Correspondingly, the vector fields to be integrated should be realized on-graph as edge functions. As a consequence,  a generic differential 1-form $\zeta$ can be expressed in the neighborhood of a vertex $v_i$ of a graph as
\begin{equation}
\label{eqApp_diffForms}
\zeta^{v_i}=\sum_{v_j\in\Gamma(v_i)} Z^{v_i}|_{v_j}e^{v_i}|_{v_j}
\end{equation}
where $e^{v_i}|_{v_j}$ is the 1-form element along the edge $v_i\rightarrow v_j$,  $Z^{v_i}|_{v_j}$ the corresponding component of the $Z^{v_i}$ vector field and $\Gamma(v_i)$ the neighborhood of $v_i$.

In order to compare differential 1-forms, it is necessary to evaluate their inner product, which also allows for defining a quadratic error functional that measures this comparison. For any two differential 1-forms, $\zeta,\eta$, their point-wise inner product in $\Gamma(v_i)$ is determined by the inner product of the 1-form elements 
\begin{equation}
\langle e^{v_i}|_{v_j},e^{v_i}|_{v_k}\rangle=g^{v_i}|_{v_j,v_k} d\mathrm{vol}
\end{equation}
where $d\mathrm{vol}$ stands for the infinitesimal intrinsic volume and $g^{v_i}|_{v_j,v_k}$ is the Gram matrix of the tangents of the transports along the edges that source from $v_i$.

Considering the $v_i$ - centered $\mathbb{R}^{N}$ coordinates of \eqref{eq_coords}, the 1-forms of their differential, $\alpha^{v_i}(p)$, have inner products, whose corresponding Gram matrix is trivial along the geodesic paths that connect the vertices of $\Gamma(v_i)$. For any other $p$, within the convex hull of $\Gamma(v_i)$, if $v_*(p)$ is the geodesic projection of $p$ to its closest edge, the Gram matrix of  $\alpha^{v_i}(p)$ differs from the (trivial) one of $\alpha^{v_i}(v_*(p))$ by 
\begin{equation}
\langle \alpha^{v_i}(p),(\alpha^{v_i}(p))^T\rangle=I_{N\times N} d\mathrm{vol}(v_*(p))+\langle \alpha^{v_i}(v_*(p)),{\epsilon(p)}^T\rangle + \langle \epsilon(p),(\alpha^{v_i}(v_*(p)))^T\rangle +\langle \epsilon(p),{\epsilon(p)}^T\rangle
\end{equation}
where $\epsilon(p)=\alpha^{v_i}(p)-\alpha^{v_i}(v_*(p))$. Since, $v_*(p)$ is the geodesic projection of $p$, the transports of $\epsilon(p)$ are orthogonal to the ones of $\alpha^{v_i}(v_*(p))$ at $v_*(p)$ and thus, $\langle \alpha^{v_i}(v_*(p)),{\epsilon(p)}^T\rangle$ is a second order term, varying from zero only due to extrinsic rotation of the transports of $\alpha^{v_i}(v_*(p))$, which is intrinsically measured by the curvature of the underlying manifold. Consequently,
\begin{equation}
\label{eqApp_trvlz}
\langle \alpha^{v_i}(p),(\alpha^{v_i}(p))^T\rangle=(I_{N\times N}+O(|x^{v_i}(p)|^2)) d\mathrm{vol}(v_*(p))=(I_{N\times N}+O(\|G^{v_i}\|_2)) d\mathrm{vol}(v_*(p))
\end{equation}
where $G^{v_i}$ is the Gram matrix of $\Gamma(v_i)$, computed as described in Sct. \ref{Sct_mds}.

By substituting \eqref{eq_alpha} to \eqref{eqApp_trvlz}, one obtains that, up to a 2nd order approximation, the Gram matrix $\Lambda^{v_i} : \langle d\lambda^{v_i}|_{v_j} , d\lambda^{v_i}|_{v_k}\rangle =\Lambda^{v_i}|_{v_j,v_k} d\mathrm{vol}$ should satisfy
\begin{equation}
\label{eqApp_Lambda}
 (E^{v_i})^T\Lambda^{v_i}E^{v_i} = I_{N\times N}
\end{equation}
namely being the rank - $N$ pseudo-inverse of $G^{v_i}$.

With this on - graph realization of $\langle d\lambda^{v_i}|_{v_j} , d\lambda^{v_i}|_{v_k}\rangle$ in hand and letting the on-edge 1-form elements $e^{v_i}|_{v_j}$ of  \eqref{eqApp_diffForms} be identified with $d\lambda^{v_i}|_{v_j}$, the inner product of two arbitrary differential 1-forms $\zeta,\eta$ is the integral of their point-wise product over the whole manifold, $\Omega$, thus reading 
\begin{equation}
\label{eqApp_innrProd}
(\eta,\zeta)\equiv \int_{\Omega}{\langle\eta , \zeta\rangle}=\sum_{v_i}{\sum_{v_j , v_k\in\Gamma(v_i)}{\left.\Lambda^{v_i}\right|_{v_j,v_k}\int_{|\Gamma(v_i)|}{\left.H^{v_i}\right|_{v_j} \left.Z^{v_i}\right|_{v_k} d\mathrm{vol}}}}
\end{equation}
where $|\Gamma(v_i)|$ denotes the geometric realization of the $N$-dimensional convex hull of $\Gamma(v_i)$. As for the edge function $H$, it is for the representation \eqref{eqApp_diffForms} of $\eta$ the analogue of the edge function $Z$ of $\zeta$. 

This realization of the inner product allows for defining a quadratic error measuring the $N$ discrepancies $\varepsilon$ of \eqref{eq_error} on the whole manifold $\Omega$ as 
\begin{equation}
\label{eqApp_L2error}
\|\varepsilon\|^2 \equiv \sum_{i=1}^N{(\varepsilon_i , \varepsilon_i)} 
\end{equation}
Then, letting the $N$ edge functions $\mathcal{E}^{v_i}|_{v_j}\in\mathbb{R}^N$, induced by the representation $\varepsilon^{v_i}=\sum_{v_j\in\Gamma(v_i)} \mathcal{E}^{v_i}|_{v_j}d\lambda^{v_i}|_{v_j}$, enter in \eqref{eqApp_innrProd} , in place of $H$ and $Z$, one gets \eqref{eq_L2error} as the on-graph evaluation of \eqref{eqApp_L2error}. We should clarify here that constancy of $\mathcal{E}^{v_i}|_{v_j}$ within $\Gamma(v_i)$ holds identically for the coefficients $E^{v_i}|_{v_j}$ of $\alpha^{v_i}$ and up to a second order approximation for the coefficients $(\varphi(v_j)-\varphi(v_i))$ of $d\varphi^{v_i}$, a fact that will become evident later on by determining the on-graph realization \eqref{eqApp_diffFunc} of the scalar functions' differential. 

Namely, following \eqref{eqApp_diffForms}, the differential $df$ of a scalar function $f$ may be written as
\begin{equation}
\label{eqApp_diffFunc0}
df=\sum_{v_j\in\Gamma(v_i)} \partial_{v_j}f^{v_i}d\lambda^{v_i}|_{v_j}
\end{equation}
Then, at each point $p\in|\Gamma(v_i)|$, we can consider its $v_i$ - centered $\mathbb{R}^{N}$ representation of \eqref{eq_coords} and evaluate $df(p)$ as $df(p)=\sum_{v_j\in\Gamma(v_i)} \left.\nabla_x f\right|_p \cdot E^{v_i}|_{v_j}d\lambda^{v_i}|_{v_j}$. Consequently, $\partial_{v_j}f(p) =  \left.\nabla_x f\right|_p \cdot E^{v_i}|_{v_j}=f(v_j)-f(v_i)+O(\|G^{v_i}\|_2)$. By substituting $\partial_{v_j}f(p)$ in \eqref{eqApp_diffFunc0}, we get 
\begin{equation}
\label{eqApp_diffFunc}
df(p)=\sum_{v_j\in\Gamma(v_i)} (f(v_j)-f(v_i))d\lambda^{v_i}|_{v_j}(p) + o(\|G^{v_i}\|_2)
\end{equation}
It should be clarified here that the remainder $\sum_{v_j\in\Gamma(v_i)}{O(\|G^{v_i}\|_2)d\lambda^{v_i}|_{v_j}(p)}$ is $o(\|G^{v_i}\|_2)$ because it vanishes if $O(\|G^{v_i}\|_2)$ is constant in $\Gamma(v_i)$, due to the barycentric constraint $\sum_{v_j\in\Gamma(v_i)}{\lambda^{v_i}|_{v_j}(p)}=1$.  Consequently \eqref{eqApp_diffFunc} verifies the second order approximation of scalar functions' differential given in \eqref{eq_diff} in the main text.

Having in avail on-graph realizations for both the inner product of 1-forms and the scalar functions' differential,  the divergence of 1-forms can be evaluated as the inner product - dual of the differential, namely letting one of the 1-forms $\eta$ or $\zeta$ of \eqref{eqApp_innrProd}to be exact, so that inner product offers the divergence of the other. So, let  $\eta=df$, where $f$ is an arbitrary scalar function. Then, the divergence $\mathrm{div}(\zeta)$ of $\zeta$ results from the duality $(df,\zeta)=(f,\mathrm{div}(\zeta))$, which indicates that one should use \eqref{eqApp_innrProd} in order to transfer the realization of the vertex functions’ differential operator to an operator acting on edge functions.
Using this on-graph realization of scalar functions’ differential, its inner product with an 1-form $\zeta$, by \eqref{eqApp_innrProd}, reads up to a second order approximation
\begin{equation}
\label{eqApp_innrProd0}
(df,\zeta)=\sum_{v_i}{\sum_{v_j , v_k\in\Gamma(v_i)}{(f(v_j)-f(v_i))\left.\Lambda^{v_i}\right|_{v_j,v_k} \hat{Z}^{v_i}|_{v_k} \mathrm{vol}(\Gamma(v_i))}}
\end{equation}
where $\hat{Z}^{v_i}|_{v_k}$ stands for the mean value of ${Z}^{v_i}|_{v_k}$  in $|\Gamma(v_i)|$. By rearranging the summands so that one vertex of $f$ appears in each summation term, \eqref{eqApp_innrProd0} is rewritten as
\begin{multline}
\label{eqApp_innrProd1}
(df,\zeta)=-\sum_{v_i}f(v_i)\left(\sum_{v_j , v_k\in\Gamma(v_i)}{\left.\Lambda^{v_i}\right|_{v_j,v_k} \hat{Z}^{v_i}|_{v_k}\mathrm{vol}(\Gamma(v_i))}\right.\\
-\left.\sum_{v_j : v_i\in\Gamma(v_j)}\sum_{v_k\in\Gamma(v_j)}{\left.\Lambda^{v_j}\right|_{v_i,v_k} \hat{Z}^{v_j}|_{v_k}\mathrm{vol}(\Gamma(v_j))}\right)
\end{multline}
Since, on the right-hand side of this identity, no operation is applied to $f$, other than the inner product, the vertex function that multiplies $f$ is the divergence of $\zeta$, thus reading
\begin{equation}
\label{eqApp_div0}
\mathrm{div}(\zeta)|_{v_i}=\sum_{v_j : v_i\in\Gamma(v_j)}\sum_{v_k\in\Gamma(v_j)}{\mathrm{vol}(\Gamma(v_j))\left.\Lambda^{v_j}\right|_{v_i,v_k} \hat{Z}^{v_j}|_{v_k}} - \sum_{v_j , v_k\in\Gamma(v_i)}{\mathrm{vol}(\Gamma(v_i))\left.\Lambda^{v_i}\right|_{v_j,v_k} \hat{Z}^{v_i}|_{v_k}}
\end{equation}

Having this second order realization of the divergence operator in hand, we can determine the corresponding realization of the Laplacian operator, by letting $\zeta = df$, for a certain scalar function $f$.  As indicated by \eqref{eqApp_diffFunc}, up to a second order approximation this is equivalent to substituting $\hat{Z}^{v_j}|_{v_k}=f(v_k)-f(v_k)$ thus evaluating
\begin{multline}
\mathrm{div}(df)|_{v_i}=\sum_{v_j : v_i\in\Gamma(v_j)}\sum_{v_k\in\Gamma(v_j)}{\mathrm{vol}(\Gamma(v_j))\left.\Lambda^{v_j}\right|_{v_i,v_k} (f(v_k)-f(v_j))}\\
- \sum_{v_j , v_k\in\Gamma(v_i)}{\mathrm{vol}(\Gamma(v_i))\left.\Lambda^{v_i}\right|_{v_j,v_k} (f(v_k)-f(v_i))}
\end{multline}
By writing this linear combination of the values of $f$ in matrix form, we can evaluate the Laplacian operator as a sparse matrix $\mathcal{L}$ with entries $\mathcal{L}^{v_i}|_{v_k}$, for $v_k \in \Gamma^2(v_i)$ given by
\begin{multline}
\label{eqApp_lplc0}
\mathcal{L}^{v_i}|_{v_k}=\sum_{v_j : v_i\in\Gamma(v_j)}{\mathrm{vol}(\Gamma(v_j))\left.\Lambda^{v_j}\right|_{v_i,v_k}} + \delta^{v_i}|_{v_k}\sum_{v_j , v_m\in\Gamma(v_i)}{\mathrm{vol}(\Gamma(v_i))\left.\Lambda^{v_i}\right|_{v_j,v_m}} \\
-\sum_{v_j\in\Gamma(v_k)}{\mathrm{vol}(\Gamma(v_k))\left.\Lambda^{v_k}\right|_{v_i,v_j}} - \sum_{v_j \in\Gamma(v_i)}{\mathrm{vol}(\Gamma(v_i))\left.\Lambda^{v_i}\right|_{v_j,v_k}}
\end{multline}
where $\delta^{v_i}|_{v_k}$ stands for the sparse identity matrix, having 1 if $v_i=v_k$ and 0 otherwise.

In order to derive vectorized versions of the defining realizations \eqref{eqApp_innrProd} for the inner product, \eqref{eqApp_div0} for the divergence and \eqref{eqApp_lplc0} for the Laplacian, we should overcome the problem that $\Lambda$, although sparse, it has entries ordered with respect to three dimensions. However, $\Lambda^{v_i}$, as a pseudo-inverse of $E^{v_i}(E^{v_i})^T$ (refer to \eqref{eqApp_Lambda}), may be expressed as a vector product of $N$ edge functions, namely the columns of the pseudo-inverses of the local frames $E^{v_i}$ of \eqref{eq_LclFrms}. Thus, we can express inner product's weighting factors as
\begin{align}
\left.\Lambda^{v_i}\right|_{v_j,v_k} \mathrm{vol}(\Gamma(v_i)) &=\left\langle{E_{-}}^{v_i}|_{v_j}~,~{E_{-}}^{v_i}|_{v_k}\right\rangle_{\mathbb{R}^N}
\label{eqApp_LambdaImpl}
\\
{E_{-}}^{v_i}|_{v_j} &\equiv \sqrt{\mathrm{vol}(\Gamma(v_i))} V^{v_i}|_{v_j}(S^{v_i})^{-1/2}\mathbf{1}_N
\label{eqApp_LclFrmsInv}
\end{align}
where, again, $\mathbf{1}_N$ denotes a (\# of neighbors) - by - $N$ identity matrix, selecting the largest $N$ eigenvalues of $S^{v_i}$ (please refer to \eqref{eq_LclFrms}), thus rendering $V^{v_i}(S^{v_i})^{-1/2}\mathbf{1}_N$ the rank-$N$ pseudo-inverse of $E^{v_i}$.

By substituting \eqref{eqApp_LambdaImpl} in \eqref{eqApp_innrProd}, the inner product may itself be implemented  as a vector product of $N$ edge functions
\begin{equation}
\label{eqApp_innrProdMat}
\left.\langle\eta,\zeta\rangle\right|_{v_i}=\left\langle \sum_{v_j\in\Gamma(v_i)}{\hat{H}^{v_i}|_{v_j}{E_{-}}^{v_i}|_{v_j}}~~, \sum_{v_k\in\Gamma(v_i)}{\hat{Z}^{v_i}|_{v_k}{E_{-}}^{v_i}|_{v_k}}\right\rangle_{\mathbb{R}^N}
\end{equation}
where the edge functions $\hat{H},\hat{Z}$ are the $H,Z$ coefficients of $\zeta,\eta$ evaluated at the mean value theorem - verifying points $p^{v_i}_*\in\Gamma(v_i) : \int_{|\Gamma(v_i)|}{\left.H^{v_i}\right|_{v_j} \left.Z^{v_i}\right|_{v_k} d\mathrm{vol}}=H^{v_i}|_{v_j}(p^{v_i}_*)Z^{v_i}|_{v_k}(p^{v_i}_*) \mathrm{vol}(\Gamma(v_i))$.

In turn, substitution of \eqref{eqApp_LambdaImpl} in the realization \eqref{eqApp_div0} of the divergence, the operator takes the form of a vector product of $N$ edge functions, thus reading
\begin{multline}
\label{eqApp_div}
\left.\mathrm{div}(\zeta)\right|_{v_i}=\sum_{v_j : v_i\in\Gamma(v_j)}{\left\langle{E_{-}}^{v_j}|_{v_i} ~~, \sum_{v_k\in\Gamma(v_j)}{{E_{-}}^{v_j}|_{v_k}\left.\hat{Z}^{v_j}\right|_{v_k}}\right\rangle_{\mathbb{R}^N}}\\
-\left\langle \sum_{v_j\in\Gamma(v_i)}{E_{-}}^{v_i}|_{v_j}~~ , \sum_{v_k\in\Gamma(v_i)}{{E_{-}}^{v_i}|_{v_k} \left.\hat{Z}^{v_i}\right|_{v_k} }\right\rangle_{\mathbb{R}^N}
\end{multline}

Finally, the same substitution in the realization \eqref{eqApp_lplc0} of the Laplacian operator, offer its vector product evaluation
\begin{multline}
\label{eqApp_lplc}
\mathcal{L}^{v_i}|_{v_k}=\sum_{v_j : v_i\in\Gamma(v_j)}{\left\langle{{E_{-}}^{v_j}|_{v_i} ~~,~~{E_{-}}^{v_j}|_{v_k}}\right\rangle_{\mathbb{R}^N}}+  \delta^{v_i}|_{v_k} \left\langle \sum_{v_j\in\Gamma(v_i)}{E_{-}}^{v_i}|_{v_j}~~ , \sum_{v_m\in\Gamma(v_i)}{{E_{-}}^{v_i}|_{v_m}}\right\rangle_{\mathbb{R}^N} \\
-\left\langle{E_{-}}^{v_k}|_{v_i} ~~, \sum_{v_j\in\Gamma(v_k)}{{E_{-}}^{v_k}|_{v_j}}\right\rangle_{\mathbb{R}^N} - \left\langle{E_{-}}^{v_i}|_{v_k} ~~, \sum_{v_j\in\Gamma(v_i)}{{E_{-}}^{v_i}|_{v_j}}\right\rangle_{\mathbb{R}^N}
\end{multline}

\section{The Euler - Lagrange equations of the optimal Euclidean embedding}
\label{appB}
Concerning the notation adopted for representing vectors, matrices and tensors and their contractions, Einstein summation convention is used and in the matrix form version of the expressions, the vectors are treated as columns.

Since the sought embedding in $\mathbb{R}^N$, $\varphi=[\varphi_j]_{j=1..N}$ should minimize the discrepancies of \eqref{eq_error}, $\varepsilon_j=d\varphi_j-\alpha_i Q^i_j$, the squared $L^2$ norm of these 1-forms is considered to measure their magnitude. Using the evaluation \eqref{eqApp_innerProdDef} of the inner product of differential 1-forms, the corresponding error functional reads
\begin{equation}
\label{eqApp_errFunc}
\mathcal{J}[\varphi,Q]=\frac{1}{2}\|(\varepsilon,\varepsilon)\|^2=\frac{1}{2}\int_{\Omega}\langle\varepsilon_j , \varepsilon^j \rangle=\frac{1}{2}\int_{\Omega}\varepsilon_j \wedge \star\varepsilon^j
\end{equation}
where $\Omega$ is the space to be embedded, $\wedge$ stands for the wedge (exterior) product and $\star$ stands for the Hodge duality operator.

By varying $\varphi,Q$ around the optimal pair $\tilde{\varphi},\tilde{Q}$ as $\varphi=\tilde{\varphi}+\epsilon_\varphi \delta\varphi$ and $Q=\tilde{Q}+\epsilon_Q\delta Q$ and due to the symmetry of $\langle,\rangle$, the error functional reads $\mathcal{J}[\varphi,Q]=\int_{\Omega}{(\epsilon_\varphi \delta\varphi_j-\epsilon_Q\alpha_i\delta Q^i_j)\wedge\star\varepsilon^j|_{\tilde{\varphi},\tilde{Q}}}+O({\epsilon_\varphi}^2)+O({\epsilon_Q}^2)$. As a consequence, the $Q$ - variation of $\mathcal{J}[\varphi,Q]$ read
\begin{equation}
\label{eqApp_deltQ}
\left.\frac{d}{d\epsilon_Q}\mathcal{J}[\varphi,Q]\right|_{\epsilon_Q=0} =-\int_{\Omega}{\alpha_i\delta Q^i_j\wedge\star\varepsilon^j} 
\end{equation}
while the $\varphi$ - variations read  $\left.\frac{d}{d\epsilon_\varphi}\mathcal{J}[\varphi,Q]\right|_{\epsilon_\varphi=0} =\int_{\Omega}{d\delta \varphi_j\wedge\star\varepsilon^j} $. Then by the generalized Stokes theorem, applied to $d(\delta \varphi_j\wedge\star\varepsilon^j)=d\delta \varphi_j\wedge\star\varepsilon^j+\delta \varphi_j d\wedge\star\varepsilon^j$, the $\varphi$ - variations of $\mathcal{J}[\varphi,Q]$ read
\begin{equation}
\label{eqApp_deltPhi}
\left.\frac{d}{d\epsilon_\varphi}\mathcal{J}[\varphi,Q]\right|_{\epsilon_\varphi=0} =\int_{\partial\Omega}{\star\varepsilon^j \delta \varphi_j} -\int_{\Omega}{\delta \varphi_j d\star\varepsilon^j} = \int_{\Omega}\delta\varphi_j \mathrm{div}(\varepsilon^j)
\end{equation}
Here the divergence, $\mathrm{div}$, of 1-forms stands for the inner product - dual of the funcions' differential, as determined in \eqref{eqApp_divNeumann} to combine both the co-differential operator $d\star$ in $\Omega$ and the inner product with the normals of $\partial\Omega$,  on $\partial\Omega$.

Considering the  $Q$ - stationarity of $\mathcal{J}[\tilde{\varphi},\tilde{Q}]$ first, we should point out that $\delta Q$ is not free, since it should hold $Q^T Q=\tilde{Q}^T\tilde{Q}=I_{N\times N} \Leftrightarrow \delta Q^T \tilde{Q}+\tilde{Q}^T\delta{Q}=\epsilon_Q \delta Q^T \delta Q$. Consequently, at $\epsilon_Q=0$, $\delta Q^T \tilde{Q}+\tilde{Q}^T\delta{Q}=0$. Since $\tilde{Q}^T \delta Q$ should be skew-symmetric, the freely varying components are the lower (or upper) triangular components of $\tilde{Q}^T \delta Q$. Letting $\delta\theta_j^i$ denote these components, for $i>j$, and since $\tilde{Q}\tilde{Q}^T=I_{N\times N}$ the integrand of \eqref{eqApp_deltQ} reads $\alpha_i\delta Q^i_j\wedge\star\varepsilon^j=\delta\theta^k_j(\alpha_i\tilde{Q}^i_k\wedge\star\varepsilon^j -\alpha_i\tilde{Q}^i_j\wedge\star\varepsilon^k )$, with the summation convention applied for $k>j$. Due to the arbitrariness of $\delta\theta_j^k$, the  $Q$ - stationarity of $\mathcal{J}[\tilde{\varphi},\tilde{Q}]$ is equivalent to the $(k,j)$-symmetry requirement $\alpha_i\tilde{Q}^i_k\wedge\star\varepsilon^j =\alpha_i\tilde{Q}^i_j\wedge\star\varepsilon^k$. Given that the term $\alpha_i \tilde{Q}_k^i\wedge \star \alpha_m \tilde{Q}_j^m=\langle \alpha_i \tilde{Q}_k^i , \alpha_m \tilde{Q}_j^m \rangle$ of $\alpha_i \tilde{Q}_k^i\wedge \star\varepsilon^j$ is $(k,j)$-symmetric, due to the symmetry of $\langle,\rangle$, the $Q$ - stationarity of $\mathcal{J}[\tilde{\varphi},\tilde{Q}$ reduces to 
\begin{equation}
\langle \alpha_i \tilde{Q}_k^i , d\tilde{\varphi}^j \rangle = \langle \alpha_i \tilde{Q}_j^i , d\tilde{\varphi}^k \rangle
\end{equation}
This $(k,j)$-symmetry requirement may be expressed as a matrix symmetry reqirement in the form
\begin{equation}
\label{eqApp_ELq} 
B_{\tilde{\varphi}}\tilde{Q}=\tilde{Q}^T{B_{\tilde{\varphi}}}^T ~~ , ~~ \text{where} ~~ B_{\tilde{\varphi}} : \langle \alpha_i , d\tilde{\varphi}^j \rangle = (B_{\tilde{\varphi}})^j_i d\mathrm{vol}
\end{equation}
If the singular matrix decomposition of $B_{\tilde{\varphi}}$ reads $L_{\tilde{\varphi}} \Sigma_{\tilde{\varphi}} {R_{\tilde{\varphi}}}^T$, then $\tilde{Q}=R_{\tilde{\varphi}}\sigma {L_{\tilde{\varphi}}}^T$, where $\sigma$ is a diagonal matrix with either 1 or -1. By substituting this $\tilde{Q}$ back in $\mathcal{J}[\tilde{\varphi},\tilde{Q}]$, one gets 
\begin{equation}
\mathcal{J}[\tilde{\varphi},\tilde{Q}]=\frac{1}{2} \int_{\Omega}{\langle d\tilde{\varphi}_j , d\tilde{\varphi}^j \rangle} -\int_{\Omega}{\mathrm{tr} (\Sigma_{\tilde{\varphi}} \sigma) d\mathrm{vol}} + \frac{1}{2} \int_{\Omega}{\langle \alpha_j , \alpha^j \rangle}
\end{equation}
and since singular values are non-negative, $\sigma$ should be $I_{N\times N}$, in order to minimize $\mathcal{J}[\tilde{\varphi},\tilde{Q}]$, thus offering  \eqref{eq_ELq}  for determining $\tilde{Q}$.

The next step is to consider the  $\tilde{\varphi}$ - stationarity of $\mathcal{J}[\tilde{\varphi},\tilde{Q}]$, which zeros the right-hand side of \eqref{eqApp_deltPhi}. Then, arbitrariness of $\delta\varphi$, leads to the requirement $\mathrm{div}(\varepsilon^j )=0$, thus resulting to the Euler-Lagrange equations \eqref{eq_ELphi} for $\tilde{\varphi}$
\begin{equation}
\label{eqAp_ELphi} 
\mathrm{div}(d\tilde{\varphi}_j) =\mathrm{div}(\alpha_i \tilde{Q}^i_j) \Leftrightarrow 
\begin{cases}
d(\star d\tilde{\varphi}_j) = d\left(\star\alpha_i \tilde{Q}^i_j\right) \\ 
 \left.(d\tilde{\varphi}_j)\right|_{\partial\Omega} = \left.\left(\alpha_i \tilde{Q}^i_j\right)\right|_{\partial\Omega}
\end{cases} 
\end{equation}
where by $\left.(\cdot)\right|_{\partial\Omega}$  we denote the inner product of an 1-form on $\partial\Omega$ with the normals of $\partial\Omega$.

\section{Proof of theorem \ref{th_converge}}
\label{appC}
Before entering the convergence analysis of the iterative scheme of \eqref{eq_ELiter}, we should determine the homogenous space $[\tilde{\varphi},\tilde{Q}]$ of the solutions of the Euler-Lagrange equations \eqref{eqApp_ELq} and \eqref{eqAp_ELphi}.
\begin{lemma}
\label{lem_hSpace}
The solutions $\tilde{\varphi},\tilde{Q}$ of \eqref{eqApp_ELq} and \eqref{eqAp_ELphi} are unique up to simultaneous rigid orthogonal transformations and constant translations of $\tilde{\varphi}$
\end{lemma}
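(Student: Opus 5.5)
We first exhibit the claimed symmetries and then show that no other freedom exists. Let $(\tilde{\varphi},\tilde{Q})$ solve \eqref{eqApp_ELq} and \eqref{eqAp_ELphi}, and take a constant orthogonal $R$ and a constant $c\in\mathbb{R}^N$. Set $\varphi'=\tilde{\varphi}R+c$ and $Q'=\tilde{Q}R$.

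\textbf{The symmetries are solutions.} Since $dc=0$, we have $d\varphi'=d\tilde{\varphi}R$, so $\varepsilon[\varphi',Q']=\varepsilon[\tilde{\varphi},\tilde{Q}]R$. Because $\mathrm{div}$ acts linearly and componentwise on the $1\times N$ row of forms, $\mathrm{div}(\varepsilon')=\mathrm{div}(\varepsilon)R=0$, and the boundary part of \eqref{eqApp_divNeumann} is preserved in the same way. Hence \eqref{eqAp_ELphi} holds for $(\varphi',Q')$.

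For \eqref{eqApp_ELq}, the row-wise definition gives $B_{\varphi'}=R^TB_{\tilde{\varphi}}$ (or $B_{\tilde\varphi}R$, depending on the index placement). Then
\[
B_{\varphi'}Q'=R^TB_{\tilde{\varphi}}\tilde{Q}R ,
\]
which is again symmetric. The SVD of $B_{\varphi'}$ is obtained from that of $B_{\tilde{\varphi}}$ by rotating one factor, so $R_{\varphi'}L_{\varphi'}^T=\tilde{Q}R$ in accordance with \eqref{eq_ELq}. Therefore the whole class is contained in the solution set.

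\textbf{Uniqueness of $\tilde{\varphi}$ for fixed $\tilde{Q}$.} Suppose two solutions $(\varphi^1,Q^1)$ and $(\varphi^2,Q^2)$ share the same $Q$. Their difference $u=\varphi^1-\varphi^2$ satisfies $\mathrm{div}(du)=0$ with homogeneous Neumann data. Pairing with $u$ through the duality $(du,du)=(u,\mathrm{div}(du))=0$ forces $du=0$. On connected $\Omega$ this makes $u$ constant, which is the translation freedom.

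\textbf{Reducing to a rotation.} The remaining task is to show that $Q^2=Q^1R$ for a constant $R$. We would substitute \eqref{eqAp_ELphi} into the functional. Writing $P$ for the $L^2$-orthogonal projection of $1$-forms onto exact forms (Hodge decomposition under the Neumann condition), the stationarity condition gives $d\tilde{\varphi}=P(\alpha\tilde{Q})$. The reduced functional is then
\[
\mathcal{J}=\tfrac12\|(I-P)\alpha\tilde{Q}\|^2 .
\]
At the stationary point, \eqref{eq_ELq} expresses that $\tilde{Q}$ maximizes $\int\mathrm{tr}(B\tilde{Q})$ pointwise, with $\sigma=I$ as in Appendix \ref{appB}.

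The main obstacle is to show that two pointwise alignment fields $Q^1$ and $Q^2$ that both satisfy this condition differ by a constant. We would first try the following route. Define $R(p)=(Q^1)^TQ^2$. Second-order stationarity shows that $B_{\varphi}Q$ is symmetric positive semidefinite. We would then argue that $d\varphi^2=P(\alpha Q^1R)$ together with the polar-decomposition uniqueness of $Q$ (which holds where $B$ is nonsingular) forces $dR=0$. Along the way we would differentiate the symmetry condition and use integrability, i.e. exactness of $d\varphi$.

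If this fails, the fallback is to settle for uniqueness of the global minimizer rather than of all stationary points. We would then invoke the convexity of $\mathcal{J}$ in $\varphi$ together with the Procrustes structure in $Q$. This leaves an additional nondegeneracy assumption, $\det B_{\tilde{\varphi}}\neq0$ almost everywhere, which we would state explicitly.
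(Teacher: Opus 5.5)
Your first two steps are sound. A constant orthogonal $R$ and a constant $c$ map solutions to solutions, with $B_{\varphi'}=R^TB_{\tilde\varphi}$ and $Q'=\tilde QR$. For fixed $\tilde Q$, the Neumann problem fixes $\tilde\varphi$ up to a constant. But the third step is the whole content of the lemma, and it is not carried out. You need to show that $R(p)=(Q^1)^TQ^2$ is constant, and the route you sketch cannot deliver this. From $d\varphi^2=P(\alpha Q^1R)$ you cannot extract a pointwise relation between $d\varphi^2$ and $d\varphi^1$: $P$ is a nonlocal projection, and $P(\alpha Q^1R)=P(\alpha Q^1)R$ holds only once $R$ is already known to be constant, so the argument is circular. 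Polar-decomposition uniqueness only says that each $Q^a$ is determined pointwise by its own $\varphi^a$; it gives no link between the two solutions. The fallback does not rescue the lemma either. It proves a different statement, uniqueness of the global minimizer rather than of all solutions of \eqref{eqApp_ELq}--\eqref{eqAp_ELphi}, and needs an added hypothesis. Moreover, separate convexity in $\varphi$ plus pointwise Procrustes in $Q$ does not give joint uniqueness, because the constraint set $Q(p)\in O(N)$ is not convex.

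The missing idea is the integrability (rigidity) argument on which the paper's proof rests. The paper starts from the pointwise relation $\tilde Q|_2=\tilde Q|_1C$, $d\tilde\varphi|_2=d\tilde\varphi|_1C$ with $C(p)$ orthogonal. It reads this off \eqref{eqApp_ELq}, whose symmetry condition is invariant under $(d\varphi,Q)\mapsto(d\varphi\, C,QC)$, rather than deriving it; so do not expect to obtain it from the symmetry condition alone. Given that relation, the argument runs as follows:
\begin{itemize}
\item Exactness of both differentials forces $d\tilde\varphi|_1\wedge dC=0$, equivalently $d\tilde\varphi|_2\wedge C^TdC=0$.
\item Expand $(C^TdC)_{ij}=(d\tilde\varphi|_2)_k\Theta^k_{ij}$ in the coframe $d\tilde\varphi|_2$.
\item Orthogonality of $C$ makes $\Theta^k_{ij}$ antisymmetric in $(i,j)$, while the wedge condition makes it symmetric in $(k,i)$.
\item Alternating the two identities around the three indices gives $\Theta^k_{ij}=-\Theta^k_{ij}$, hence $dC=0$.
\end{itemize}
Only after this does your second step apply and produce the constant translations. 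The expansion requires $d\tilde\varphi|_2$ to be pointwise nondegenerate, which is the natural counterpart of the $\det B_{\tilde\varphi}\neq0$ assumption you anticipated.
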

\begin{proof}
Starting from \eqref{eqApp_ELq}, two pairs $\tilde{\varphi}|_1,\tilde{Q}|_1$ and $\tilde{\varphi}|_2,\tilde{Q}|_2$  that satisfy it should be related, via an orthogonal matrix $C$, as $\tilde{Q}|_2=\tilde{Q}|_1 C$, $d\tilde{\varphi}|_2=d\tilde{\varphi}|_1 C$. Thus, $C$, except of being orthogonal, should correspond to a Jacobian matrix, which implies that $d\tilde{\varphi}|_1 \wedge dC=0$. Exploiting the orthogonality of $C$ one may write this integrability condition as $d\tilde{\varphi}|_2 \wedge C^T dC=0$. Then, expressing the $N\times N$ 1-forms $C^T dC$  in terms of $d\tilde{\varphi}|_2$, we may write $(C^T dC)_{i,j} =(d\tilde{\varphi}|_2)_k \Theta^k_{i,j}$. Since $C$ is orthogonal, $C^T dC$ is skew symmetric, thus implying that $\Theta^k_{i,j}=-\Theta^k_{j,i}$. On the other hand, $d\tilde{\varphi}|_2 \wedge C^T dC=0$ is equivalent to the symmetry condition  $\Theta^k_{i,j}=\Theta^i_{k,j}$. By alternating symmetry and askew-symmetry we have  $\Theta^k_{i,j}=\Theta^i_{k,j}=-\Theta^i_{j,k}=-\Theta^j_{i,k}=\Theta^j_{k,i}=\Theta^k_{j,i}=-\Theta^k_{i,j}$, thus implying that $\Theta^k_{i,j}=0\Leftrightarrow dC=0$. Since $dC=0$, if any of $(\tilde{\varphi}|_1,\tilde{Q}|_1)$ or $(\tilde{\varphi}|_2,\tilde{Q}|_2)$ satisfies \eqref{eqAp_ELphi} then the equation is also satisfied by the other pair.

So, $d\tilde{\varphi}$ and $\tilde{Q}$ that resolve \eqref{eqApp_ELq} and \eqref{eqAp_ELphi} differ only by simultaneous rigid orthogonal transformations. Within this equivalence class, the identification $d\tilde{\varphi|_1}=d\tilde{\varphi|_2}$ is equivalent to $\tilde{\varphi|_1}=\tilde{\varphi|_2} + \text{const}$. 
\end{proof}

Having determined the equivalence class of the solutions of the Euler-Lagrange equations, we consider the scheme \eqref{eq_ELiter} of alternating resolutions of \eqref{eqAp_ELphi} and \eqref{eqApp_ELq} to test if it converges within this equivalence class
\begin{subequations}
\begin{align}
\left.\tilde{\varphi}\right|_{n+1} &: \mathrm{div}(d\left.\tilde{\varphi}\right|_{n+1}) = \mathrm{div}\left(\alpha \tilde{Q}|_n\right)
\label{eqApp_phiIter}
\\
\tilde{Q}|_{n+1} &: B_{\tilde{\varphi}|_{n+1}}\tilde{Q}|_{n+1}={\tilde{Q}|_{n+1}}^T {B_{\tilde{\varphi}|_{n+1}}}^T
\label{eqApp_qIter}
\end{align}
\end{subequations}
Since $\mathcal{J}[\varphi,Q]$ is the squared $L^2$ norm of the approximation discrepancies $\varepsilon[\varphi,Q]$ of \eqref{eq_error}, based on inner product, $\mathcal{J}[\varphi,Q]$ is strictly convex with respect to $\varepsilon$ and consequently, with respect to $d\varphi$ and convex with respect to $Q$, since its variation $\delta Q$ has $\frac{N(N-1)}{2}$ independent components, instead of $N$. Consequently, if $\mathcal{J}[\tilde{\varphi}|_{n},\tilde{Q}|_{n}]$ is decreasing, it converges to its global minimum and the corresponding minimizers $\tilde{\varphi}|_\infty,\tilde{Q}|_\infty$ may vary only due to $\tilde{Q}|_\infty$ and due to constant shifts of $\tilde{\varphi}|_\infty$.

In appendix \ref{appB} , it is shown that given $\tilde{\varphi}|_{n}$, ${\tilde{Q}|_{n}}$ as results from \eqref{eqApp_qIter}, minimizes $\mathcal{J}[\tilde{\varphi}|_{n},Q]$ and consequently  $\mathcal{J}[\tilde{\varphi}|_{n},\tilde{Q}|_{n}]\leq\mathcal{J}[\tilde{\varphi}|_{n},\tilde{Q}|_{n-1}]$.  So, in order to verify the sought convergence, one should test if  $\mathcal{J}[\tilde{\varphi}|_{n+1},\tilde{Q}|_{n}]\leq\mathcal{J}[\tilde{\varphi}|_{n},\tilde{Q}|_{n}]$.

Considering the Hodge decomposition of the 1-forms $\alpha\tilde{Q}|_n$, the defining equation of $\tilde{\varphi}|_{n+1}$ in \eqref{eqApp_phiIter} is the one that determines the exact component of the decomposition. As a result, the discrepancies $\varepsilon[\tilde{\varphi}|_{n+1},\tilde{Q}|_{n}]=d\tilde{\varphi}|_{n+1}-\alpha\tilde{Q}|_n$ consist of the remaining terms of the Hodge decomposition (co-exact and harmonic), thus satisfying 
\begin{equation}
\label{eqApp_coclosed}
d(\star\varepsilon[\tilde{\varphi}|_{n+1},\tilde{Q}|_{n}])=0
\end{equation}
Now, if, instead of $\tilde{\varphi}|_{n+1}$, one employs functions $\varphi=\tilde{\varphi}|_{n+1}+\phi$, then the discrepancies read $\varepsilon[\varphi ,\tilde{Q}|_{n}] = \varepsilon[\tilde{\varphi}|_{n+1},\tilde{Q}|_{n}]+d\phi$ and the integrand of $\mathcal{J}[\varphi,\tilde{Q}|_{n}]$ becomes
\begin{equation}
\left\langle\varepsilon_j [\varphi ,\tilde{Q}|_{n}] , \varepsilon^j [\varphi ,\tilde{Q}|_{n}] \right\rangle=\left\langle\tilde{\varepsilon}_j[\tilde{\varphi}|_{n+1},\tilde{Q}|_{n}] , \tilde{\varepsilon}^j [\tilde{\varphi}|_{n+1},\tilde{Q}|_{n}]\right\rangle + \langle d\phi_j , d\phi^j \rangle +2\left\langle\ d\phi_j , \tilde{\varepsilon}^j [\tilde{\varphi}|_{n+1},\tilde{Q}|_{n}]\right\rangle
\end{equation}
Using the identity \eqref{eqApp_coclosed}, the third  term of this expansion reads $\left\langle\ d\phi_j , \tilde{\varepsilon}^j [\tilde{\varphi}|_{n+1},\tilde{Q}|_{n}]\right\rangle = d\left(\phi_j \star \tilde{\varepsilon}^j [\tilde{\varphi}|_{n+1},\tilde{Q}|_{n}]\right)$. As a consequence, integration of $\left\langle\varepsilon_j [\varphi ,\tilde{Q}|_{n}] , \varepsilon^j [\varphi ,\tilde{Q}|_{n}] \right\rangle$ over $\Omega$ offers
\begin{equation}
\mathcal{J}[\varphi ,\tilde{Q}|_{n}] =\mathcal{J}[\tilde{\varphi}|_{n+1},\tilde{Q}|_{n}] + \frac{1}{2}\int_{\Omega}{\langle d\phi_j , d\phi^j \rangle} +\int_{\partial\Omega}{\phi_j \star \tilde{\varepsilon}^j [\tilde{\varphi}|_{n+1},\tilde{Q}|_{n}]}
\end{equation}
Since the boundary conditions that $\mathrm{div}$ imposes in \eqref{eqApp_phiIter} are the ones of \eqref{eqAp_ELphi}, $\left.(d\tilde{\varphi}|_{n+1})\right|_{\partial\Omega} = \left.\left(\alpha\tilde{Q}|_n\right)\right|_{\partial\Omega}$ or equivalently, $\left.\tilde{\varepsilon} [\tilde{\varphi}|_{n+1},\tilde{Q}|_{n}]\right|_{\partial\Omega} = 0$, thus offering
\begin{equation}
\mathcal{J}[\varphi ,\tilde{Q}|_{n}] =\mathcal{J}[\tilde{\varphi}|_{n+1},\tilde{Q}|_{n}] +\frac{1}{2} \int_{\Omega}{\langle d\phi_j , d\phi^j \rangle} \Leftrightarrow \mathcal{J}[\varphi ,\tilde{Q}|_{n}] >\mathcal{J}[\tilde{\varphi}|_{n+1},\tilde{Q}|_{n}] 
\end{equation}

So $\tilde{\varphi}|_{n+1}$, obtained by resolving the Neumann boundary problem of the Poisson - like equation \eqref{eqApp_phiIter}, is the unique minimizer of $\mathcal{J}[\varphi,\tilde{Q}|_{n}] $, thus implying that
\begin{equation}
\label{eqApp_phiConv}
\mathcal{J}[\tilde{\varphi}|_{n+1} ,\tilde{Q}|_{n}] -\mathcal{J}[\tilde{\varphi}|_{n},\tilde{Q}|_{n}] =-\frac{1}{2}\left\|d(\tilde{\varphi}|_{n+1} -\tilde{\varphi}|_{n})\right\|^2
\end{equation}

Consequently, the considered iterative scheme of \eqref{eq_ELiter} leads to the convergence of the error functional $\mathcal{J}[\varphi,Q]$ to its global minimum and $\tilde{\varphi}|_{n} ,\tilde{Q}|_{n}$ converge in the equivalence class $[\tilde{\varphi}|_\infty,\tilde{Q}|_\infty]$.
%
%

\section{Additional experimental results}
\label{appD}
\subsection{Results on synthetic data sets}
\textit{Klein's bottle}: Parametrs $(u,v)\in[0,2\pi)$, equations of the bottle figure in $\mathbb{R}^3$:
\begin{subequations}
\begin{align*}
x(u,v)&=6\cos(u)(1+\sin(u))+4(1-0.5\sin(u))(\cos(u)\cos(v)\chi(u\leq\pi)+\cos(v+\pi)\chi(u>\pi))
\\
y(u,v)&=16\sin(u)+4(1-0.5\cos(u))\sin(u)\cos(v)\chi(u\leq\pi)
\\
z(u,v)&=4(1-0.5\cos(u))\sin(v)
\end{align*}
\end{subequations}
Fig. \ref{fig_klein} contains the actual embeddings of the Klein's bottle in $\mathbb{R}^3$ for the top-4 methods, according to the metrics of table \ref{tab_swiss}. The proposed method produces the closer results to IsoMap, mainly distorting the region of the  self-intersection (which cannot be represented by a $C^2$ embedding as the one of the proposed method). Being a global method, Diffusion Maps faithfully reconstructs the bottle figure, however up to global affine transformations, thus  breaking the distances to geodesics correspondence. Global affine transformations also distort the embedding of Laplacian Eigenmaps that  also suffers from local -- to -- global geometric consistency, since different regions deform in different ways.
\begin{figure}
  \centering
 \includegraphics[width=1.0\linewidth]{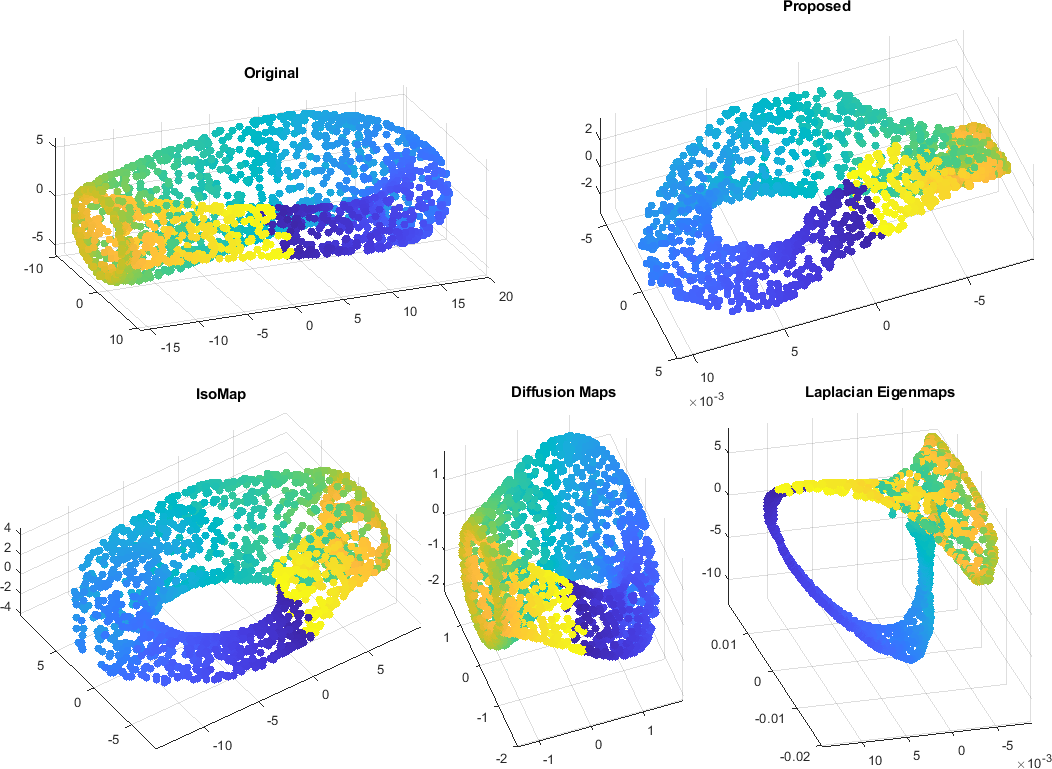}
  \caption{Reconstruction results on the Klein's bottle. Reconstruction of the original bottle figure from the neighborhoods' graph of constant intrinsic dimension 2, across all neighborhoods. The coloring depicts the data points' polar angle in the original figure. The embedding of IsoMap is used as ground truth for the approximate isometry of the original bottle figure. The global embedding of Diffusion Maps best approximates it up to affine transformations, while  the proposed local method follows, however up to orthogonal transformations (see also table \ref{tab_swiss}). As a result, the proposed local method follows the global embedding of IsoMap in the Euclidean approximation of the original manifold's geodesic distances, while Diffusion Maps distort them.}
\label{fig_klein}
\end{figure}

\textit{Flat torus}: Parametrs $(u,v)\in[0,2\pi)$, equations of the flat torus in $\mathbb{R}^4$:
\begin{equation*}
r(u,v)=(2\cos(u) , 2\sin(u) , \cos(v) , \sin(v))
\end{equation*}
equations of the short embedding in $\mathbb{R}^3$
\begin{subequations}
\begin{align*}
x(u,v)&=\cos(u)(2+\sin(v))
\\
y(u,v)&=\sin(u)(2+\sin(v))
\\
z(u,v)&=\cos(v)
\end{align*}
\end{subequations}
\begin{figure}
  \centering
 \includegraphics[width=1.0\linewidth]{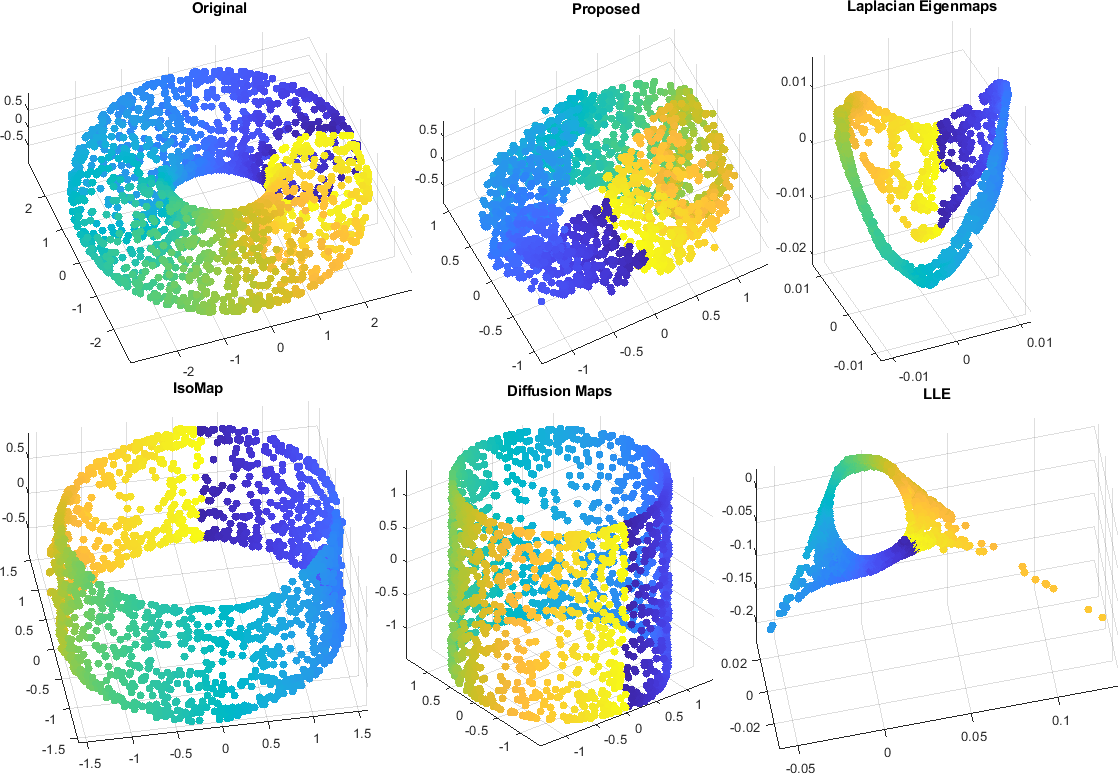}
  \caption{Embedding of the flat torus in $\mathbb{R}^3$. Reconstruction of this short embedding from the neighborhoods' graph of constant intrinsic dimension 2, across all neighborhoods. The coloring depicts the data points' polar angle in the original figure. The $(u,v)$ - parameterization of the manifold is used as ground truth for evaluating the embeddings' capability to recover the manifold's product structure. The global embedding of Diffusion Maps best approximates it up to affine transformations, while  the proposed local method follows, however up to orthogonal transformations (see also table \ref{tab_swiss}). Visually, the proposed embedding method best approximates the support of the $\mathbb{R}^3$ isometry of the flat torus, while the baseline ones, either local or global, they flatten the $v$-coordinate.}
\label{fig_klein}
\end{figure}
\subsection{Diagnoses' discrimination in the RNA-seq dataset}
\begin{figure}
  \centering
 \includegraphics[width=1.0\linewidth]{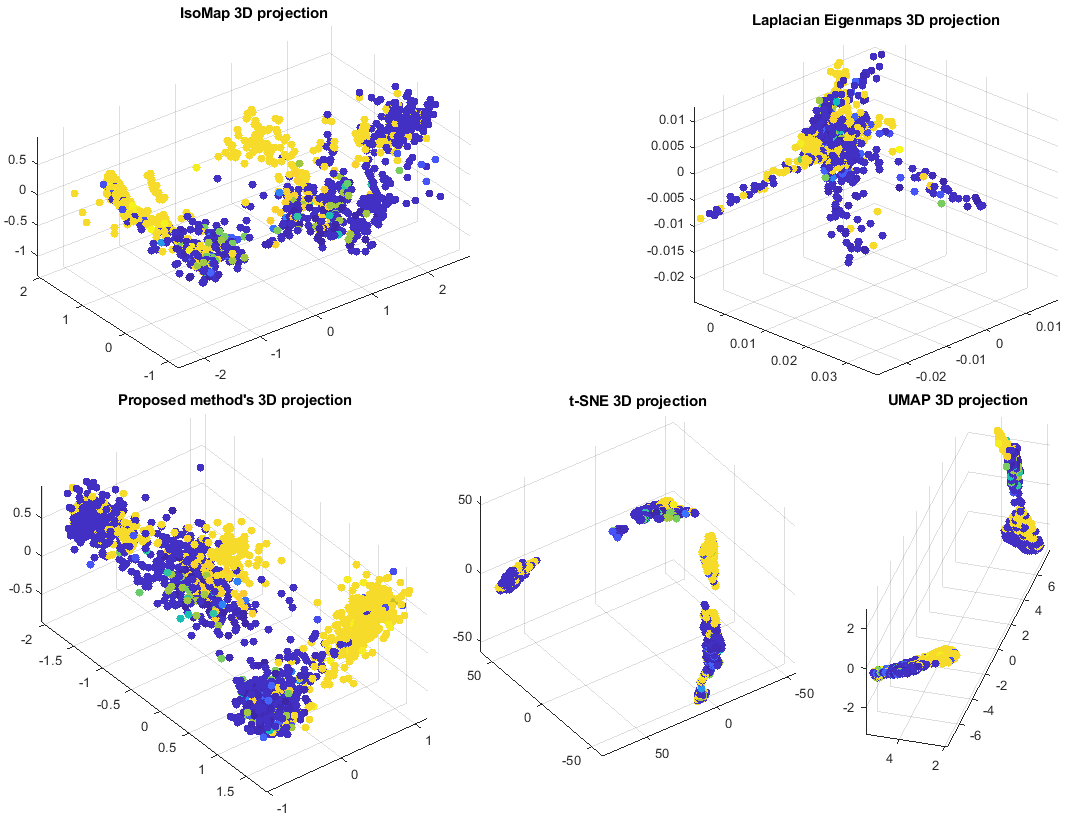}
  \caption{Visualization of the RNA-seq emedding. 3D projection of the computed 30-dimensional embeddings. Coloring is according to the lung cancer diagnosis. The proposed algorithm respects the global distances recovered by IsoMap and enhanses the data discrimination (table \ref{tab_pbmc}) by retaining  logal structures, up to integrability.}
\label{fig_pbmc}
\end{figure}
The connection between embeddins' local/global geometric consistency and their data discrimination capability, hypothesized by the results of table \ref{tab_pbmc}, becomes more evident via embeddings' visualization. In Fig. \ref{fig_pbmc}, the embeddings' three top-scaled dimensions are plotted and it becomes evident how the different methods distribute the datapoints, within their embeddings. The stochastic distribution -- based tSNE and UMAP practically cluster data according  to the cross-homogeneity of the neighborhoods' internal distribution. IsoMap and the proposed method  offer distances preserving embeddings that respectively enforce continuity and smoothness, thus not amplifying data clustering, however retaining the original neighboring relations. The deviation in the picture that IsoMap and the proposed method offer for the dataset mainly reflects the fact that data points' alignment in IsoMap is determined by the original points' shortest paths, while the alignment in the proposed method's picture is determined by the Laplacian inversion -- induced integration. As a result, any shortest paths' on-graph collapse is also evident in the IsoMap embedding, while Laplacian inversion imposes $C^2$ continuity thus not allowing such collapses. 

\subsection{Results on the FMNIST}
\begin{figure}
\centering
 \includegraphics[width=1\linewidth]{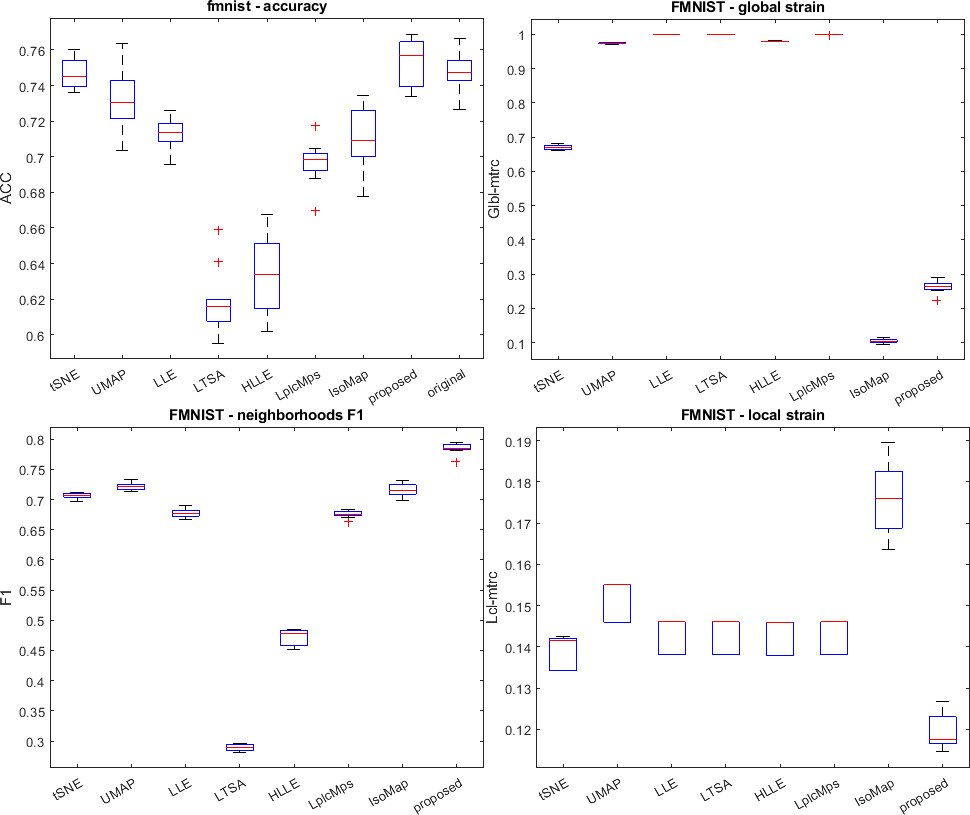}
  \caption{FMNIST dataset 10-fold experiment. The measures' box plots depict the maximum/minimum boundaries at whiskers, the boxes of the populations' 25th-75th percantile and the median values in red. Possible outliers are marked by red crosses. The proposed method best preserves neighboring relations and together with t-SNE, it best represents datapoints' clusters, slightly enhancing the clustering according to the "original" graph's neighborhoods. The performance of UMAP and IsoMap that preserve neighborhoods but not local distances, indicates that the original neighborhoods contain data points of mixed labels, however ordered according to data points' distances}
\label{fig_fmnist}
\end{figure}



\end{document}